\documentclass[10pt, journal, twoside]{base/IEEEtran}

\IEEEoverridecommandlockouts

\usepackage{multicol}
\usepackage[bookmarks=true]{hyperref}
\usepackage[numbers,sort&compress]{natbib}
\usepackage{amsmath}
\usepackage{amssymb}
\usepackage{amsthm}
\usepackage{bm}
\usepackage{algpseudocode}
\usepackage[ruled]{algorithm2e}
\usepackage{microtype}
\usepackage{graphicx}
\usepackage{caption}
\usepackage{subcaption}
\usepackage{booktabs}
\usepackage{xcolor}

\newcommand{\minimize}[1]{\underset{#1}{\text{minimize}}}
\newcommand{\norm}[1]{\Vert #1 \Vert}
\newcommand{\obj}{\text{obj}}
\newcommand{\pos}{\text{pos}}
\newcommand{\orn}{\text{orn}}
\newcommand{\cs}{\text{cs}}
\newcommand{\mcal}[1]{\mathcal{#1}}
\newcommand{\tp}{\mathsf{T}}
\newcommand{\SE}{\mathrm{SE}}
\newcommand{\SO}{\mathrm{SO}}

\newtheorem{remark}{Remark}
\newtheorem{proposition}{Proposition}
\newtheorem{theorem}{Theorem}

\newcommand{\insertfig}{\vspace{2ex} \includegraphics[width=\linewidth]{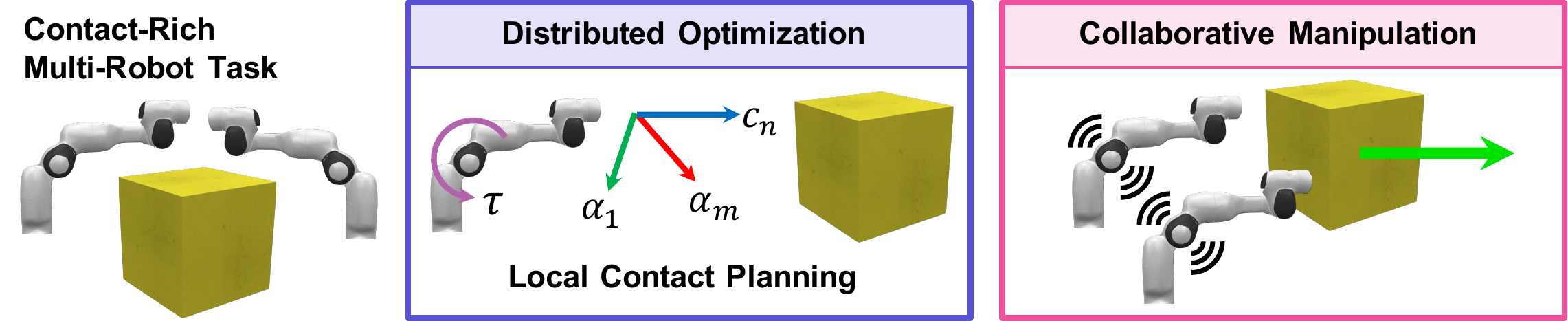}\captionof{figure}{DisCo enables multiple robots to work together, applying forces to objects and their environment through scalable, distributed contact-implicit optimization. DisCo decomposes complex centralized tasks into coupled smaller-scale problems solved efficiently by individual robots over a robot-to-robot communication network.}}\label{fig:disco}

\makeatletter
\apptocmd{\@maketitle}{\centering\insertfig}{}{}%
\makeatother

\begin{document}
    \title{DisCo: Distributed Contact-Rich Trajectory Optimization for Forceful Multi-Robot Collaboration}

    \author{~\IEEEmembership{}Ola~Shorinwa$^1$,~Matthew~Devlin$^2$,~Elliot~W.~Hawkes$^2$,~Mac~Schwager$^1$%
    \thanks{*This project was funded in part by DARPA YFA award D18AP00064 and NSF NRI awards 1830402, 1925030, and 1925373.}%
    \thanks{$^{1}$Department of Aeronautics and Astronautics, Stanford University, Stanford, CA 94305, USA {\tt\small \{shorinwa, schwager\}@stanford.edu}}%
    \thanks{$^{2}$Department of Mechanical Engineering, University of California Santa Barbara, Santa Barbara, CA, USA.
            {\tt\small \{matthewdevlin, ewhawkes\}@ucsb.edu}}%
            }

	\maketitle
	
	\begin{abstract}
		We present DisCo, a distributed algorithm for contact-rich, multi-robot tasks. DisCo is a distributed contact-implicit trajectory optimization algorithm, which allows a group of robots to optimize a time sequence of forces to objects and to their environment to accomplish tasks such as collaborative manipulation, robot team sports, and modular robot locomotion. We build our algorithm on a variant of the Alternating Direction Method of Multipliers (ADMM), where each robot computes its own contact forces and contact-switching events from a smaller single-robot, contact-implicit trajectory optimization problem, while cooperating with other robots through dual variables, enforcing constraints between robots. Each robot iterates between solving its local problem, and communicating over a wireless mesh network to enforce these consistency constraints with its neighbors, ultimately converging to a coordinated plan for the group. The local problems solved by each robot are significantly less challenging than a centralized problem with all robots' contact forces and switching events, improving the computational efficiency, while also preserving the privacy of some aspects of each robot's operation.
We demonstrate the effectiveness of our algorithm in simulations of collaborative manipulation, multi-robot team sports scenarios, and in modular robot locomotion, where \mbox{DisCo} achieves $3$x higher success rates with a $2.5$x to $5$x faster computation time. Further, we provide results of hardware experiments on a modular truss robot, with three collaborating truss nodes planning individually while working together to produce a punctuated rolling-gate motion of the composite structure. Videos are available on the project page: \href{https://disco-opt.github.io/}{https://disco-opt.github.io/.}

	\end{abstract}

	\begin{IEEEkeywords}
		Trajectory optimization, distributed optimization, multi-robot systems, manipulation, multi-agent networks.
	\end{IEEEkeywords}
	
	\section{Introduction}
\label{sec:introduction}
Many multi-robot problems, such as multi-robot manipulation, robot team sports, and modular robot locomotion, involve rich contact interactions between robots and objects or surfaces in their environment. However, these contact interactions are often ignored or abstracted away in existing distributed solution techniques (e.g., by assuming fixed, rigid grasp locations for multi-robot manipulation, or ignoring ground-robot forces in modular robot locomotion).  Contact events are inherently discrete and combinatoric in nature, and contact forces (such as friction and normal forces) can be difficult to model accurately, and quite difficult to treat computationally.  Although ignoring or abstracting away these challenges circumvents the complexity associated with contact planning, the resulting simplified model fails to exploit the rich space of contact interactions necessary for dexterous manipulation, e.g., manipulation of objects in constrained spaces, such as through doorways, windows, and hallways. We present a distributed trajectory planner amenable to a broad class of multi-robot problems involving contact, where each robot computes its local contact sequences and forces necessary to complete the task. 

Our algorithm, \emph{DisCo}, enables Distributed Contact-rich Trajectory Optimization, endowing multi-robot teams with dexterous capabilities in many multi-robot problems. For example, in collaborative manipulation, DisCo enables multi-robot teams to make and break contact with the object or environment, e.g., in narrow corridors, while manipulating the object or locomoting collaboratively with other robots. In robot team sports such as robot soccer and robot hockey, DisCo enables intelligent tactics planning, with each robot in the team working cooperatively to score a point, e.g., by passing an object (or ball) into a goal area. Moreover, in locomotion problems involving modular robots, DisCo enables each module to compute the contact interactions between each module and the environment required for locomotion, in cooperation with other modules. In all of these examples, our method can find dynamic solutions, that include free-flying ballistic phases where the robots or objects are not in static or pseudo-static equilibrium (e.g., throwing or shooting a ball, or jumping a modular robot over an obstacle). Figure~\ref{fig:intro_figure} visualizes our algorithm working with three different multi-robot scenarios.

\begin{figure*}[t]
    \centering
\includegraphics[width=\linewidth]{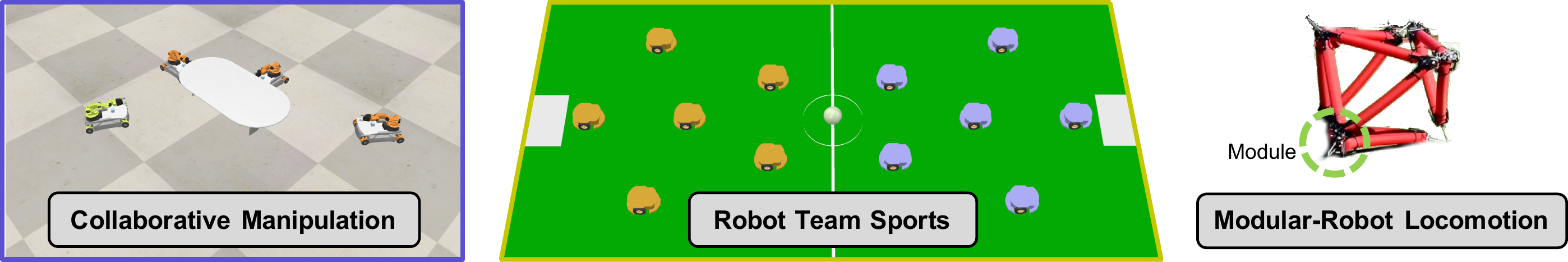}

    \caption{Our algorithm, DisCo, is amenable to a broad class of collaborative multi-robot problems involving contact, such as: (Left) collaborative manipulation, e.g., of a table; (Center) robot team sports, e.g., robot soccer; and (Right) locomotion of modular robots, e.g., \cite{usevitch_untethered_2020}. 
    In each scenario, each robot solves a smaller contact-implicit trajectory optimization problem to complete the task in collaboration with other robots. 
    }
    \label{fig:intro_figure}
\end{figure*}

We first frame multi-robot, contact-rich collaboration as a centralized contact-implicit trajectory optimization problem, in which the contact sequence and contact forces are encoded with, so-called, complementarity constraints.  This yields a large, nonconvex constrained optimization problem which is computationally difficult to solve, in general.  Our key insight is to leverage structure in this optimization problem to distributed its solution among the robots. Specifically, we use the framework of the Alternating Direction Method of Multipliers (ADMM) \cite{ola2020SOVA} to transform the centralized problem into a collection of smaller local problems (one for each robot), with coupling constraints among neighboring pairs of robots to ensure a globally consistent solution. Surprisingly, the coupling constraints only apply to a small subset of shared global variables relevant to the task, making the distributed decomposition significantly more efficient to solve than the equivalent centralized version of the problem. Applying this decomposition to the centralized problem and applying an ADMM solution technique yields our algorithm, DisCo. We emphasize that our algorithm is a trajectory planner, giving a dynamically feasible open-loop action sequence for the robots to achieve a desired collaboration objective.  In practice we re-plan as frequently as possible in an MPC outer-loop, while each robot tracks its planned trajectory with a faster low-level feedback controller. Moreover, our decomposition strategy enables trajectory optimization among heterogenous robots with different modalities. E.g., we may have pushing robots with no gripper, aerial robots towing with cables, mobile manipulators, and fixed-base manipulators all working together with the same algorithm.

This paper extends our prior work in \cite{shorinwa2021distributed}, where we consider collaborative manipulation of objects by a group of robots. In this work, we extend our algorithm to a broader class of multi-robot problems, including multi-robot team sports and modular-robot locomotion. In addition, we apply our algorithm to plan locomotion gaits in hardware for a soft modular truss robot platform, described in \cite{usevitch_untethered_2020}. 
In simulation, we show that DisCo outperforms centralized contact-implicit planners in a range of multi-robot manipulation tasks. DisCo improves the success rate of centralized through-contact planners by more than a factor of $3$, while reducing the computation time by a factor greater than $5$ in a pushing task in $\SE(2)$ and a factor greater than $2.5$ in $\SE(3)$ manipulation. Further, we demonstrate the effectiveness of our algorithm in robot team sports, where the robots attempt to get a \emph{puck} into a goal area to score a point with dynamic, ballistic passing and shooting maneuvers. Finally, we implement our algorithm on hardware, demonstrating its application to the locomotion of a soft modular truss robot platform.

Our contributions are as follows:
\begin{itemize}
    \item We present DisCo, a distributed algorithm for solving contact-rich multi-robot trajectory optimization problems, which exploits the inherently separable structure of the optimization problem to give efficient local iterates for each robot using an ADMM framework.

    \item DisCo achieves a $150\%$ to $400\%$ improvement in average solution time and more than $200\%$ higher success rate in solving multi-robot contact-implicit trajectory optimization problems compared to a centralized solver.
    
    \item We demonstrate DisCo in simulation studies of collaborative manipulation, robot team sports, and modular-robot locomotion. We also demonstrate DisCo in hardware to plan locomotion gaits for a modular soft truss robot.
\end{itemize}

Our paper is organized as follows: In Section~\ref{sec:related_work}, we survey related literature.  In Section~\ref{sec:problem_formulation}, we formulate three classes of multi-robot problems with contact---namely (i) multi-robot collaborative manipulation, (ii) multi-robot team sports tactics, and (iii) locomotion of modular robots---as contact-implicit trajectory optimization problems. Subsequently, we derive our distributed algorithm DisCo, for solving these problems, in Section \ref{sec:distributed_planning}. We present simulation results in several case studies in \ref{sec:simulations} and results of hardware experiments on locomotion gait planning for a soft modular truss robot in Section~\ref{sec:experiments}. We conclude in Section \ref{sec:conclusion}.

\section{Related Works}
\label{sec:related_work}
\subsection{Trajectory Optimization with Contact}
Trajectory optimization methods with contact planning can be classified broadly into multi-phase approaches and contact-implicit approaches, depending on the procedure taken to encode the contact dynamics in the optimization problem. Multi-phase methods require the specification of the contact phases and contact points on the robot and introduce different constraints for each phase, reflecting the robot's dynamics within each phase \cite{winkler2018gait, mordatch2012discovery, carpentier2018multicontact}. These methods do not optimize over the possible contact points on the robot. To allow for optimization over the contact points on the robot and the corresponding surfaces, some methods solve a mixed-integer convex program \cite{aceituno2017simultaneous} with additional binary variables indicating the status of a contact point and its assignment to a contact surface. However, these methods require the specification of candidate convex contact surfaces in the environment, which might not be readily available. Moreover, mixed-integer optimization presents notable computation challenges. To reduce the computation difficulty, some of these methods relax the contact dynamics constraints by introducing them into the objective function, allowing for violations of the constraints. In general, these approaches are only feasible for systems with relatively simple dynamics where the contact phases and contact points can be specified.

In contact-implicit approaches, the contact forces are incorporated as variables into the optimization problem, with an associated constraint describing the contact model. These methods introduce the contact dynamics constraints at every time instant within the optimization, and hence do not require specification of the contact phases. Many approaches represent the contact dynamics as a set of linear complementarity constraints \cite{stewart2000implicit, sleiman2019contact, posa2014direct, zhou2020accelerated}, representing the existence of contact forces only when the two surfaces have a non-zero gap. This approach for modeling the contact dynamics has been employed in simulation and trajectory optimization with promising results. However, the non-smooth contact dynamics makes numerical optimization notably challenging. Some other approaches consider a stochastic representation of the complementarity constraints to account for uncertainties in the physical properties of the colliding surfaces \cite{drnach2021robust}. 

To reduce the computation difficulty, some methods utilize a smoothed contact model with virtual forces at contact points, removing the discontinuities in the contact force, and solve the resulting problem using trust-region-based sequential convex programming \cite{onol2018comparative} or an iterative Linear-Quadratic Regulator (iLQR) approach \cite{neunert2017trajectory, carius2019trajectory}. Other methods take a bilevel optimization approach, where the upper-level problem involves computing the optimal trajectory of the problem which depend on the contact forces computed in the lower-level problem \cite{carius2018trajectory, chatzinikolaidis2020contact}. However, these approaches do not preclude the existence of contact forces at a distance and penetration of the colliding surfaces, producing unrealistic results.  None of these existing methods for trajectory optimization with contact have considered distributed solution techniques for multiple collaborating robots, as we do here.

Specifically, we employ distributed optimization for more-efficient through-contact optimization (see \cite{shorinwa2024distributedopttutorial, shorinwa2024distributedoptsurvey, halsted2021survey} for a review of distributed optimization in robotics).
We model the contact dynamics between the robots and the object (or environment) using linear complementarity constraints. By distributing the complementarity constraints among the robots, each robot solves a contact-implicit trajectory optimization problem considering only its own contact dynamics, mitigating the challenges to numerical optimization posed by the non-smooth contact constraints. With this approach, our method does not require any explicit specification of the discrete contact phases between each robot and the object.

\subsection{Collaborative Manipulation}
Now, we review relevant work in collaborative manipulation, noting that many of these works do not consider planning through contact events or contact sequences.
Notable challenges arise in centralized approaches for collaborative manipulation from the need to communicate with a central agent for computation of all forces and torques required to manipulate an object \cite{hichri2016cooperative, bais2015dynamic, ortenzi2018dual}. Addressing these challenges, distributed approaches allow each robot to compute its forces and torques without communication with a central agent. Leader-follower methods designate a single robot as a leader which computes the forces and torques for all the robots within the group. In some of these methods, the robots communicate their dynamics models and problem constraints such as the feasible forces and torques which can be applied by the robots to the leader \cite{petitti2016decentralized}. However, communication of the problem information between the leader and followers makes these methods unsuitable for problems involving varied communication topology and dynamics models. Other leader-follower methods attempt to resolve these challenges by enabling the followers to infer the motion of the leader using force or impedance sensors without communicating with the leader. The followers apply forces and torques to complement the forces applied by the leader using feedback controllers \cite{wang2016force}, impedance controllers \cite{marino2018two}, and adaptive control \cite{culbertson2018decentralized}. In these methods, the leader manipulates the object without any consideration of the limitations of the followers, possibly resulting in infeasible trajectories for the followers. Further, these approaches do not allow for trajectory planning and require the specification of a desired trajectory for the leaderby a human operator \cite{mas2012object}.

In caging approaches, the robots surround the object and move it to a desired location while keeping the object enclosed for the duration of the task using gradient-based control laws \cite{wan2012cooperative, kobayashi2012cooperative}. These approaches introduce significant limitations on the range of possible manipulation tasks as the robots must enclose the object during the task. Other methods take an optimization approach to collaborative manipulation \cite{stouraitis2020online}. In the distributed methods, each robot solves an optimization problem for its trajectory locally, enabling the robots to avoid collisions in their environments while considering the dynamics limitations of the robots \cite{verginis2018communication, ola2020collab}.

All the above methods for collaborative manipulation assume each robot grasps the object before the beginning of the task and maintains the grasp for the duration of the task. This assumption can be overly restrictive in manipulation within complex environments where the manipulation task can only be achieved through a variety of distinct contact interactions between the robots and the object. These situations require discrete contact interactions or re-grasping maneuvers between the robots and the object to manipulate the object through narrow paths and around complex features within the environment. Requiring the robot to maintain its grasp for the entirety of the task can limit the range of feasible manipulation tasks. Our method overcomes these limitations, enabling a varied set of contact interactions between the robots and the object. With our approach, each robot makes and breaks contact with the object, as required, to complete the manipulation task.

        \section{Problem Formulation}
\label{sec:problem_formulation}
\smallskip
\noindent \textbf{Notation.} Before proceeding, we define mathematical notation used in this paper. We denote the gradient of a function $f$ as $\nabla f$. We denote the matrix-weighted norm $\sqrt{x^{\tp}Mx^{\tp}}$ as $\norm{x}_{M}$, where $M$ is positive semi-definite. The Special Orthogonal Group $\SO(d)$ is the space of all rotations in dimension $d$, consisting of orthogonal matrices with a determinant of $1$, while the Special Euclidean Group $\SE(d)$ is the space of all rigid-body transformations in dimension $d$, comprising of all translations ${t \in \mathbb{R}^{d}}$ and rotation ${R \in \SO(3)}$.

\smallskip
\noindent \textbf{Communication Network.}
We represent the communication network among the robots as an undirected connected  graph $\mcal{G} = (\mcal{V}, \mcal{E})$, with the set of vertices ${\mcal{V} = \{1,\cdots,N\}}$ representing the robots, and the set of edges ${\mcal{E} \subseteq \mcal{V} \times \mcal{V}}$. An edge $(i,j)$ exists in $\mcal{E}$ if robots $i$ and $j$ share a communication link. We denote the set of robots that can communicate with robot $i$, its neighbor set, as ${\mcal{N}_{i} = \{j \mid (i, j) \in \mcal{E}\}}$.

\smallskip
\noindent \textbf{Problem}.
We pose multi-robot collaborative manipulation, tactics planning in robot team sports, and locomotion gait planning of modular robots within the framework of contact-implicit trajectory optimization, presenting a unifying formulation of these problems. We assume the contact-implicit optimization problem involves $N$ robots. In the subsequent discussion, we extend the use of the term \emph{robot} to encompass a module in a modular robot. In the following discussion, we denote the configuration of robot $i$ by ${x_{i} \in \mathbb{R}^{n_{i}}}$ and the configuration of the contact surface (which could be the surface of the pertinent object in a manipulation context or the ground surface in a locomotion problem) by ${x_{\cs} \in \mathbb{R}^{b}}$.

\subsection{Rigid-Body Dynamics}
We derive the rigid-body dynamics equations of the composite system consisting of the robots and all relevant bodies in the problem (e.g., the object in a manipulation context). We denote the configuration of the composite system  as ${\bm{x} = [x_{\cs}^{\tp}, x_{i}^{\tp}, i = 1,\cdots,N]^{\tp}}$, with ${\bm{x} \in \mathbb{R}^{n_{s}}}$; the concatenated generalized velocities as ${\dot{\bm{x}} \in \mathbb{R}^{n_{s}}}$, the concatenated torques applied by the robots as ${\bm{\tau} = [\tau_{i}^{\tp}, i = 1,\cdots,N]}$, with ${\bm{\tau} \in \mathbb{R}^{n_{\tau}}}$; and the concatenated contact forces as  ${\bm{F} = [F_{i}^{\tp}, i = 1,\cdots,N]}$, with ${\bm{F} \in \mathbb{R}^{n_{f}}}$, where $F_{i}$ denotes the contact force associated with robot $i$. The rigid-body dynamics of the composite system is described by the manipulator equation
\begin{equation}
    \label{eq:dynamics}
    \bm{M}(\bm{x})\ddot{\bm{x}} + \bm{G}(\dot{\bm{x}},\bm{x}) =  \bm{V}(\bm{x})\bm{\tau} + \bm{W}(\bm{x})\bm{F},
\end{equation}
where ${\bm{M}: \mathbb{R}^{n_{s}} \rightarrow \mathbb{R}^{n_{s} \times n_{s}}}$ represents the composite inertia, $\bm{G}(\cdot)$ represents the non-linear friction, Coriolis, and gravity terms influencing the dynamics of these bodies, and ${\bm{W}(\bm{x}) \in \mathbb{R}^{n_{s} \times n_{f}}}$ represents the Jacobian for the contact forces in the dynamics. The robots apply generalized torques (i.e., torques and forces) $\bm{\tau}$, and ${\bm{V}(\bm{x}) \in \mathbb{R}^{n_{s} \times n_{\tau}}}$ maps the generalized torques of the system $\bm{\tau}$ to the dynamics equation. The dynamics model in \eqref{eq:dynamics} applies equally to fully-actuated ($\text{rank}(\bm{V}) = n_s$) or underactuated ($\text{rank}(\bm{V}) < n_s$) robots. We assume that the dynamics constraint function in \eqref{eq:dynamics} is separable with respect to the generalized positions and velocities of the robots. This assumption is satisfied in many collaborative multi-robot problems where independently-actuated robots manipulate or interact with a common object, which is often the case in practice. For example, this assumption holds in many collaborative manipulation problems where each robot has separate end-effectors controlled independently, robot team sports and in modular-robot locomotion problems.

\subsection{Contact Dynamics}
We derive the contact dynamics between a robot and a contact surface using impulses, for better numerical stability. This approach is standard in contact-implicit optimization, (we refer interested readers to \cite{stewart2000implicit} for additional details). We denote the contact impulse associated with robot $i$ by ${f_{i} \in \mathbb{R}^{n_{f,i}}}$ consisting of a normal component $c_{i}$ and tangential components ${\alpha_{i} \in \mathbb{R}^{m}}$.
A non-zero contact impulse exists when the distance between a robot and the contact surface equals zero, which we represent as the constraint 
\begin{equation}
c_{i} \cdot \beta_{i}(x_{i},x_{\cs}) = 0
\end{equation}
where ${c_{i} \in \mathbb{R}}$ represents the magnitude of the normal contact impulse with ${c_{i} \geq 0}$ and $\beta_{i}(x_{i},x_{\cs})$ represents the distance function between the closest point on the surface of robot $i$ and the contact surface. The normal component of the contact impulse $f_{i,n}$ acts in the direction of the gradient of the distance function with respect to the configuration of the contact surface given by
\begin{equation}
\label{eq:contact_normal}
f_{i,n} = c_{i} \cdot \frac{\nabla \beta_{i}(x_{i},x_{\cs})}{\norm{\nabla \beta_{i}(x_{i},x_{\cs})}_{2}},
\end{equation}
with ${\nabla \beta_{i}(x_{i},x_{\cs}) \in \mathbb{R}^{b}}$.

The relationships between the contact impulse and the distance function between robot $i$ and the contact surface are represented by the complementarity constraints:
\begin{equation}
\label{eq:contact_complementarity}
\begin{aligned}
&c_{i} \cdot \beta_{i}(x_{i},x_{\cs}) = 0, \\
&\beta_{i}(x_{i},x_{\cs}) \geq 0, \quad c_{i} \geq 0,
\end{aligned}
\end{equation}
indicating the existence of non-zero contact impulses only when robot $i$ makes contact with the contact surface. We assume the object is manipulated by (potentially underactuated) robots, where each robot applies joint torques $\tau$ to actuate a subset of its joints.

We utilize the Coulomb friction model in formulating the contact dynamics constraints associated with the frictional impulse. We assume the tangential impulse generated by contact between the robot and the contact surface remains within the friction cone, given by the constraint $\norm{\alpha_{i}}_{2} \leq \mu c_{i}$, where $\alpha_{i}$ represents the tangential components of the contact impulse between robot $i$ and the contact surface, $\mu$ represents the coefficient of friction at the contact surface, and $c_{i}$ represents the normal component of the contact impulse. In our work, we utilize a polygonal cone approximation to the Coulomb friction cone,\footnote{One can also consider the exact friction cone, but this requires second order cone optimization tools that add complexity to the method without a significant practical benefit.  We present the polygonal cone model here for simplicity.} with $m$ tangential directions given by ${\nu_{1},\ldots,\nu_{m}}$, with ${\nu_{k} \in \mathbb{R}^{d}}$, as depicted in Figure \ref{fig:friction_cone}. With this approximation, we denote the tangential contact impulse expressed in the basis of the tangential directions by ${\alpha \in \mathbb{R}^{m}}$.  

\begin{figure}[ht]
    \centering
    \includegraphics[width=0.9\linewidth]{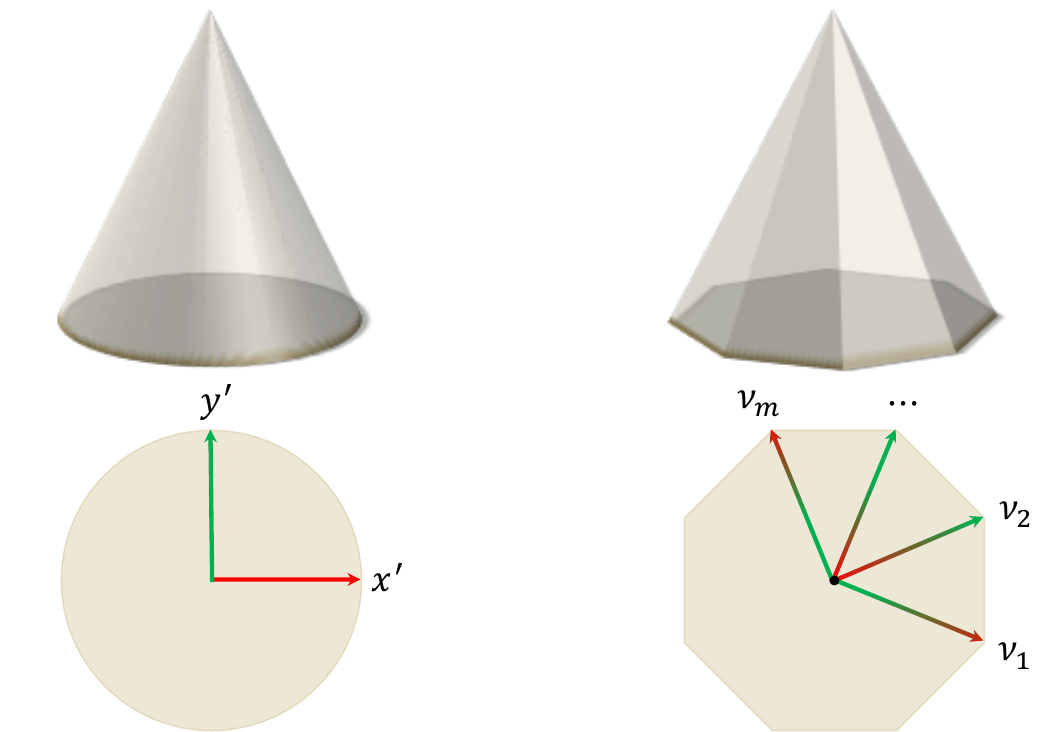}
    \caption{The Coulomb friction cone and its polygonal approximation with $m$ tangential directions denoted by $\nu_{1},\ldots,\nu_{m}$.}
    \label{fig:friction_cone}
\end{figure}

We note that the tangential contact impulse maximizes dissipation over all feasible frictional impulses. Moreover, for sliding contact interactions, the frictional contact impulse $\alpha$ acts in a direction opposite that of the sliding velocity $v$ and minimizes the following optimization problem
\begin{equation}
	\label{eq:tangential_impulse}
	\begin{aligned}
		\minimize{\alpha}\ &v^{\tp}D\alpha \\
		\text{subject to}\ & \norm{\alpha}_{1} \leq \mu c_{i} \\
									& \alpha \geq 0,
	\end{aligned}
\end{equation}
where ${D = [\nu_{1},\cdots,\nu_{m}] \in \mathbb{R}^{d \times m}}$ and ${\alpha \in \mathbb{R}^{m}}$. The optimality conditions of \eqref{eq:tangential_impulse} are given by the set of complementarity constraints
\begin{equation}
	\label{eq:contact_tangent}
	\begin{aligned}
		(\lambda e + D^{\tp}v) \perp \alpha, \\
		 \lambda e + D^{\tp}v \geq 0, \enspace \alpha \geq 0, \\
		 (\mu c_{i} - e^{\tp} \alpha) \perp \lambda, \\
		 \mu c_{i} - e^{\tp} \alpha \geq 0, \enspace \lambda \geq 0,
 	\end{aligned}
\end{equation}
where ${e \in \mathbb{R}^{m}}$ denotes the all-ones vector, and ${\lambda \in \mathbb{R}}$ denotes the Lagrange multiplier. We note that as the polygonal approximation approaches the Coulomb friction cone, the value of the Lagrange multiplier approaches the magnitude of the tangential sliding velocity.

\subsection{Collision Avoidance}
In the classes of multi-robot problems considered in this paper, we desire some contact interactions  (e.g., between the robots and an object or the ground surface to facilitate completion of the task), while we want to prohibit other contact interactions (e.g., between two robots, between robots and obstacles in the environment, and between the object being manipulated and the environment). We formulate these collision avoidance constraints as
\begin{equation}
\label{eq:collision_function}
\psi(x_{i},x_{j}) \geq \kappa_{i,j},
\end{equation}
using a collision function $\psi(x_{i},x_{j})$ specifying the distance between robots $i$ and $j$ with the minimum safe distance between the robots represented by $\kappa_{i,j}$, and similarly $\psi(x_{i}) \geq \kappa_{i}$ to prevent collision between a single robot or object and the environment.  We abuse notation slightly by using the same symbol $\psi$ for all such functions to streamline the notation.  The meaning should be clear from context.

\subsection{Optimization Problem}
We consider a contact-implicit trajectory optimization problem with the objective function of robot $i$ given by
\begin{equation}
    \int_{0}^{T} \phi_{i}(x_{i},\tau_{i},\bm{F})\ \text{dt},
\end{equation}
where $T$ denotes the duration of the task and $\phi_{i}(\cdot)$ may depend on time $t$, although we do not indicate its dependence explicitly in the problem. In many trajectory optimization problems, $\phi_{i}(\cdot)$ takes a quadratic form, including terms representing the energy expended by the actuators on the robots and the deviation of the configuration of the robots from desired values.
Summing over all robots and including all constraints, we can express the continuous-time contact-implicit trajectory optimization problem as
\begin{equation}
    \label{eq:general_problem}
    \begin{aligned}
        \minimize{\bm{x},\bm{\tau},\bm{f}}\ &\sum_{i=1}^{N} \int_{0}^{T} \phi_{i}(x_{i},\tau_{i},\bm{F})\ \text{dt} \\
        \text{subject to}\ 
        &\bm{M}(\bm{x})\ddot{\bm{x}} + \bm{g}(\dot{\bm{x}},\bm{x}) =  \bm{V}(\bm{x})\bm{\tau} + \bm{W}(\bm{x})\bm{F} \\
        &\bm{h}(\bm{x},\dot{\bm{x}}) = 0 \\
        &\bm{r}(\bm{x},\bm{\tau},\bm{F}) \leq 0 
    \end{aligned}
\end{equation}
where $\bm{h}(\cdot)$ represents constraints on the initial and desired configuration and velocities of the mobile bodies in the problem (e.g., robots) and $\bm{r}(\cdot)$ represents the contact dynamics constraints as a function of ${\bm{x},\bm{\tau},}$ and $\bm{F}$, in addition to the collision avoidance constraint in \eqref{eq:collision_function} and constraints on the torques applied by each robot. To simplify notation, we drop the subscript $t$ on $\bm{x}(t)$, $\dot{\bm{x}}(t)$, and $\ddot{\bm{x}}(t)$ in \eqref{eq:general_problem} and note that the constraints are defined over ${t \in [0, T]}$.

To compute the torques required by the robots, we discretize the continuous-time optimization problem in \eqref{eq:general_problem} for transcription to a numerical optimization problem using multiple shooting, with piecewise-constant control inputs over a time interval of $\Delta t$ seconds, which depends on the manipulation task. In line with our formulation of the contact dynamics at the impulse level, we discretize the dynamics model in \eqref{eq:dynamics} over each interval in the resulting grid, using a Backward Euler integration scheme to obtain the discrete-time dynamics model of the composite system at the impulse-momentum level,
\begin{equation}
    \label{eq:discrete_dynamics}
    \bm{m}(\dot{\bm{x}}) + \bm{g}(\dot{\bm{x}},\bm{x}) = \Delta t \cdot \bm{V}(\bm{x})\bm{\tau} + \bm{W}(\bm{x})\bm{f},
\end{equation}
where ${\bm{m}(\dot{\bm{x}}) = \bm{M}(\cdot)(\dot{\bm{x}}_{k+1} - \dot{\bm{x}}_{k})}$ and ${\bm{g}(\dot{\bm{x}},\bm{x}) = \Delta t \cdot \bm{G}(\cdot)}$ at time step $k$, with $\bm{M}$ and $\bm{G}$ evaluated at the end of the time interval. We note that ${\bm{f}}$ denotes the concatenated contact impulses of all robots.

The resulting discrete-time contact-implicit trajectory optimization problem is given by: 
\begin{equation}
    \label{eq:discrete_problem}
    \begin{aligned}
        && \minimize{\bm{x},\dot{\bm{x}},\bm{\tau},\bm{f}}\ &\sum_{i=1}^{N} \sum_{s = 0}^{T_{d}} \hat{\phi}_{i}(x_{i},\tau_{i},\bm{f},s) \\
        &&\text{subject to}\ 
        &\bm{m}(\dot{\bm{x}}) + \bm{g}(\dot{\bm{x}},\bm{x}) = \Delta t \cdot \bm{V}(\bm{x})\bm{\tau} + \bm{W}(\bm{x})\bm{f} \\
        &&&\bm{h}(\bm{x},\dot{\bm{x}}) = 0 \\
        &&&\bm{r}(\bm{x},\bm{\tau},\bm{f}) \leq 0,
    \end{aligned}
\end{equation}
where $\hat{\phi}_{i}(\cdot)$ represents the objective function evaluated numerically over the interval $[t_{s}, t_{s + 1}]$ using the corresponding values of $x_{i}$, $\tau_{i}$, and $\bm{f}$, and $T_{d}$ denotes the number of intervals in the grid. The constraint functions in \eqref{eq:discrete_problem} represent vector-valued functions applying across all time intervals. In addition, $\bm{r}(\cdot)$ represents the contact dynamics constraints described by \eqref{eq:contact_normal}, \eqref{eq:contact_complementarity}, and \eqref{eq:contact_tangent}. For notational convenience, we denote the finite-dimensional optimization variables for the generalized positions, velocities, torques, and contact impulses concatenated over all time intervals as $\bm{x}$, $\dot{\bm{x}}$, $\ddot{\bm{x}}$, $\bm{\tau}$, and $\bm{f}$, respectively.
Figure \ref{fig:contact_variables} shows a robotic manipulator in an object manipulation task, with the inertial frame shown in the lower-left corner of the figure. The contact impulses arising in the problem, initially expressed in a local reference frame attached to the end-effector of the robot, are transformed into the inertial frame in \eqref{eq:discrete_problem}. Non-zero normal and tangential contact impulses only exist when the robot is in contact with the object, as expressed by the complementarity constraints in \eqref{eq:contact_complementarity}.

\begin{figure}[ht]
    \centering
    \includegraphics[width=0.9\linewidth]{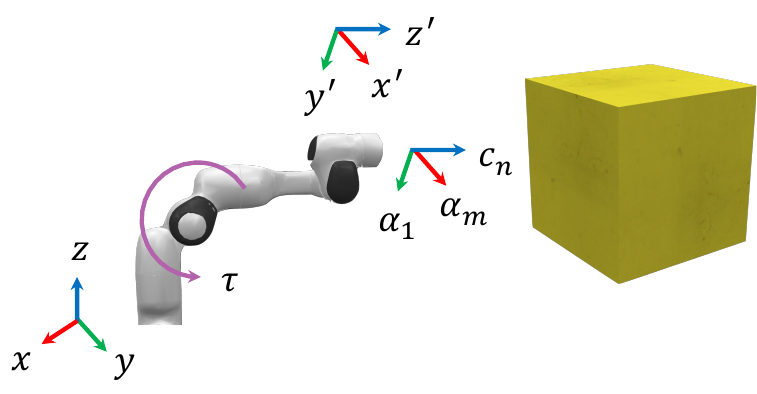}
    \caption{The contact impulses $c$ and $\alpha$, initially expressed in a local reference frame attached to the end-effector of the robot, are transformed to the inertial frame, displayed in the lower-left corner.}
    \label{fig:contact_variables}
\end{figure}

We note that some previous approaches enforce the complementarity constraint between the contact forces and the distance function in \eqref{eq:contact_complementarity} at the beginning of each time interval. This approach allows for the existence of non-zero contact forces even when contact between the robot and the object breaks away within the interval. To avoid this issue, other approaches enforce this constraint at the end of the interval, ensuring that contact forces only exist if the robot and object maintain contact at the end of the time interval. However, this approach only provides a partial remedy as non-zero contact forces can still exist within an interval when the robot does not remain in contact with the object as long as the robot contacts the object at the end of the interval. To preclude this situation, we introduce a constraint on the contact impulse and the relative velocity between robot $i$ and the object $\dot{\beta}_{i}(x_{i},x_{\obj})$ given by
\begin{equation}
	\label{eq:contact_sustained}
	c_{i} \cdot (\dot{\beta}_{i}(t + \Delta t) + \theta \dot{\beta}_{i}(t)) = 0,
\end{equation}
where $\theta \in \mathbb{R}$ represents the coefficient of restitution. For the manipulation tasks in this work, we focus on inelastic collisions between the robot and the object, described by the coefficient of restitution ${\theta = 0}$. This constraint ensures that non-zero contact forces exist between the robot and the object only when the robot remains in contact with the object over the time interval, indicated by zero relative velocity between the robot and the object over this time interval.
In Section \ref{sec:experiments}, we provide concrete examples of the general contact-implicit trajectory optimization problems given by \eqref{eq:general_problem} and \eqref{eq:discrete_problem} in each class of multi-robot problems considered in this work.

        \section{Distributed Trajectory Optimization}
\label{sec:distributed_planning}

We utilize a receding-horizon approach \cite{garcia1989model, schwenzer2021review, nascimento2018nonholonomic, shorinwa2023distributedmpc} in solving the numerical optimization problem in \eqref{eq:discrete_problem}, where we solve the problem over a smaller time span of duration $T_{r}$. The robots apply the first set of torques from the resulting solution. Subsequently, we advance the time span by one interval which defines the next optimization problem. We repeat this process until the completion of the manipulation task, with each numerical optimization problem given by
\begin{equation}
    \label{eq:discrete_problem_mpc}
    \begin{aligned}
        \mcal{P}_{C}: && \minimize{\bm{x},\dot{\bm{x}},\bm{\tau},\bm{f}}\ &\sum_{i=1}^{N}  \ell_{i}(x_{i},\tau_{i},\bm{f}) \\
        &&\text{subject to}\ 
        &\bm{m}(\dot{\bm{x}}) + \bm{g}(\dot{\bm{x}},\bm{x}) =  \bm{V}(\bm{x})\bm{\tau} + \bm{W}(\bm{x})\bm{f} \\
        &&&\bm{h}(\bm{x},\dot{\bm{x}}) = 0 \\
        &&&\bm{r}(\bm{x},\bm{\tau},\bm{f}) \leq 0,
    \end{aligned}
\end{equation}
with
\begin{equation}
\ell_{i}(x_{i},\tau_{i},\bm{f}) = \sum_{s = 0}^{T_{r}} \phi_{i}(x_{i},\tau_{i},\bm{f},s).
\end{equation}
The problem $\mcal{P}_{C}$ consists of the objective function and local constraints of all robots, which is unavailable to any single robot, coupling the computation of the required torques of each robot. Utilizing a centralized approach would be inefficient, generally, as noted earlier. To overcome this challenge, we leverage distributed optimization. In particular, we utilize ADMM \cite{mateos2010distributed, ola2020SOVA}, in contrast to distributed first-order methods \cite{nedic2017achieving, shi2015extra, xi2017add} and distributed Newton or quasi-Newton methods \cite{eisen2017decentralized, liu2023communication, shorinwa2024distributeddqn}.

We derive DisCo, our distributed algorithm for contact-rich trajectory optimization problems. In DisCo, each robot computes its torques from \eqref{eq:discrete_problem_mpc} without computing the torques of other robots, while collaborating with other robots to complete a specified task. To ease exposition of our algorithm, we present our algorithm in a setting where the robot and the contact body (object) is non-stationary, e.g., in collaborative manipulation and robot team sports. However, we note that the resulting algorithm applies to other collaborative multi-robot problems involving contact. We can decouple the composite rigid-body dynamics of the robots and contact body, expressing the dynamics constraint in the form:
\begin{equation}
	\label{eq:distributed_dynamics}
	\begin{aligned}
		m_{i}(\dot{x}_{i}, \dot{x}_{\cs})& + g_{i}(\dot{x}_{i},x_{i},\dot{x}_{\cs},x_{\cs}) =  \\ &V_{i}(x_i, x_{\cs})\tau_{i} + W_{i}(x_i, x_{\cs})\bm{f},
	\end{aligned}
\end{equation}
where $m_{i}(\cdot)$ represents the inertia terms and $g_{i}(\cdot)$ represents the non-linear friction, Coriolis, and gravity terms for the dynamics constraints involving robot $i$, similar to \eqref{eq:discrete_problem_mpc}. The dynamics constraint in \eqref{eq:distributed_dynamics} highlights the form of the coupling between the generalized positions and velocities of each robot. We note, however, that the resulting individual dynamics constraints depend on the contact body's generalized positions and velocities. As a result, we refer to the contact body's generalized positions and velocities as coupling variables.

For distributed computation of the torques in \eqref{eq:discrete_problem}, we introduce local copies of the optimization variables for the contact forces applied by the robots and the contact body's configuration and velocity, resulting in the distributed optimization problem
\begin{equation}
\label{eq:distributed_problem}
\begin{aligned}
\mcal{P}_{D}: &&\minimize{\breve{\bm{x}},\dot{\breve{\bm{x}}},\bm{\tau},\breve{\bm{f}}}\ &\sum_{i=1}^{N} \ell_{i}(x_{i},\tau_{i},\bm{f}_{i}) \\
&&\text{subject to}\ 
&\bm{v}_{i}(\breve{x}_{i},\dot{\breve{x}}_{i},\tau_{i},\bm{f}_{i}) = 0 \quad \forall i \in \mcal{V} \\
&&&h_{i}(\breve{x}_{i},\dot{\breve{x}}_{i}) = 0 \quad \forall i \in \mcal{V}  \\
&&&r_{i}(\breve{x}_{i},\dot{\breve{x}}_{i},\tau_{i},f_{i}) \leq 0\quad \forall i \in \mcal{V}\\
&&&\bm{f}_{i} = \bm{f}_{j} \quad \forall j \in \mcal{N}_{i},\ \forall i \in \mcal{V} \\
&&&\mathring{x}_{i} = \mathring{x}_{j} \quad \forall j \in \mcal{N}_{i},\ \forall i \in \mcal{V}
\end{aligned}
\end{equation}
where ${\breve{x}_{i} = [x_{i}^{\tp}, x_{\cs,i}^{\tp}]^{\tp}}$ represents the configuration of robot~$i$ and its local copy of the contact body's configuration concatenated over all times steps, ${\dot{\breve{x}}_{i} = [\dot{x}_{i}^{\tp}, \dot{x}_{\obj,i}^{\tp}]^{\tp}}$ represents the corresponding velocities, ${\breve{\bm{x}} = [\breve{x}_{i}^{\tp},\ i = 1,\cdots,N]^{\tp}}$, ${\dot{\breve{\bm{x}}} = [\dot{\breve{x}}_{i}^{\tp},\ i = 1,\cdots,N]^{\tp}}$, and ${\breve{\bm{f}} = [\bm{f}_{i}^{\tp},\ i = 1,\cdots,N]^{\tp}}$ represents the concatenation of the local optimization variables of the contact body's configuration and velocities and the contact forces applied by the robots. Further, we denote robot $i$'s local variables of the contact body's configuration and velocity as ${\mathring{x}_{i} = [x_{\cs,i}^{\tp},\dot{x}_{\cs,i}^{\tp}]^{\tp}}$, with ${\mathring{x}_{i} \in \mathbb{R}^{n_{o}}}$. The constraint function $\bm{v}_{i}(\cdot)$ represents the dynamics constraints in \eqref{eq:distributed_dynamics} as a function of robot $i$'s copy of the optimization variables associated with the contact body's configuration and velocity. 

\begin{proposition}
    The optimization problem $\mcal{P}_{D}$ in \eqref{eq:distributed_problem} is equivalent to the optimization problem $\mcal{P}_{C}$ in \eqref{eq:discrete_problem_mpc} with the same optimal solution and optimal objective value.
\end{proposition}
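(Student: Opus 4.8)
The plan is to show that the feasible sets of $\mcal{P}_{C}$ and $\mcal{P}_{D}$ are in one-to-one correspondence under the map that identifies each robot's local copies with the shared global variables, and that this correspondence preserves the objective value. Since both problems minimize the same additively separable cost $\sum_{i} \ell_{i}$, equality of the optimal values and of the optimizers follows once the feasible sets are shown to match. The central observation enabling this is that, because $\mcal{G}$ is connected, the pairwise consensus constraints $\bm{f}_{i} = \bm{f}_{j}$ and $\mathring{x}_{i} = \mathring{x}_{j}$ for all $j \in \mcal{N}_{i}$ and all $i \in \mcal{V}$ are equivalent to global agreement: there exist single values $\bm{f}^{\star}$ and $\mathring{x}^{\star}$ with $\bm{f}_{i} = \bm{f}^{\star}$ and $\mathring{x}_{i} = \mathring{x}^{\star}$ for every $i$. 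This is standard — along any path in the graph the equalities chain together — and connectivity guarantees a path between every pair of vertices.

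First I would prove the forward inclusion. Given any feasible point $(\bm{x}, \dot{\bm{x}}, \bm{\tau}, \bm{f})$ of $\mcal{P}_{C}$, I construct a candidate for $\mcal{P}_{D}$ by setting, for each robot $i$, its local contact-force copy $\bm{f}_{i}$ equal to $\bm{f}$ and its local copy $\mathring{x}_{i}$ of the contact body's configuration and velocity equal to the corresponding entries of $\bm{x}, \dot{\bm{x}}$. With all copies so assigned, the consensus constraints hold trivially. The decoupled per-robot dynamics $\bm{v}_{i}$, boundary constraints $h_{i}$, and contact constraints $r_{i}$ are, by construction in \eqref{eq:distributed_dynamics}, exactly the restriction of the centralized constraints to robot $i$'s block once the shared variables take their common value; hence each is satisfied. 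The objective is unchanged because $\ell_{i}(x_{i}, \tau_{i}, \bm{f}_{i}) = \ell_{i}(x_{i}, \tau_{i}, \bm{f})$ when $\bm{f}_{i} = \bm{f}$. Thus every $\mcal{P}_{C}$-feasible point lifts to a $\mcal{P}_{D}$-feasible point of equal cost.

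For the reverse inclusion I would take a feasible point of $\mcal{P}_{D}$ and invoke the connectivity argument above to collapse all local copies to common values $\bm{f}^{\star}$ and $\mathring{x}^{\star}$. Defining $\bm{f} := \bm{f}^{\star}$ and assembling $\bm{x}, \dot{\bm{x}}$ from the agreed contact-body variables together with the individual robot states yields a candidate for $\mcal{P}_{C}$. Here lies the key step and the main obstacle: one must verify that the collection of decoupled constraints reassembles \emph{exactly} into the centralized ones. The separability assumption on the dynamics stated in the Rigid-Body Dynamics subsection is precisely what guarantees that stacking the per-robot equations $\bm{v}_{i} = 0$ reproduces the composite manipulator equation $\bm{m}(\dot{\bm{x}}) + \bm{g}(\dot{\bm{x}}, \bm{x}) = \bm{V}(\bm{x})\bm{\tau} + \bm{W}(\bm{x})\bm{f}$, rather than an over- or under-determined system. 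I would check that the decomposition in \eqref{eq:distributed_dynamics} partitions the composite dynamics without loss, that the complementarity and friction-cone constraints $r_{i} \leq 0$ aggregate to $\bm{r} \leq 0$, and that the shared force variable enters consistently across robots once the copies coincide. Granting separability, this reassembly is exact, so the collapsed point is $\mcal{P}_{C}$-feasible with the same objective value.

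Finally, combining the two inclusions yields a cost-preserving bijection between the feasible sets, so the infima coincide and any minimizer of one problem maps to a minimizer of the other, establishing that $\mcal{P}_{D}$ and $\mcal{P}_{C}$ share the same optimal objective value and optimal solution.
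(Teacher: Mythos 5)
Your proposal is correct and follows essentially the same route as the paper's proof: use connectivity of $\mcal{G}$ to show the pairwise consensus constraints force global agreement on $\bm{f}_{i}$ and $\mathring{x}_{i}$, collapse the local copies to common variables, and observe that the resulting problem has the same feasible set and objective as $\mcal{P}_{C}$. You spell out both inclusions and flag the separability assumption more explicitly than the paper does, but the underlying argument is identical.
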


\begin{proof}
    All robots compute the same composite contact force from the equality constraints between $\bm{f}_{i}$ and $\bm{f}_{j}$ ${\forall (i,j) \in \mcal{E}}$ in \eqref{eq:distributed_problem}, noting that the communication graph $\mcal{G}$ is connected. Likewise, all robots compute the same trajectory for the contact body from the equality constraints on $\mathring{x}$. Consequently, we can replace the local composite contact force along with the configuration and velocity of the contact body computed by each robot by common optimization variables $\bm{\tilde{f}}$, $\tilde{x}_{\cs}$, and $\dot{\tilde{x}}_{\cs}$, respectively. With these variables, the optimization problem in \eqref{eq:distributed_problem} has the same feasible set with the problem in \eqref{eq:discrete_problem_mpc}, in addition to having the same objective function. As such, the optimization problems in \eqref{eq:distributed_problem} and \eqref{eq:discrete_problem_mpc} have the same optimal solution and optimal objective value.
\end{proof}

\begin{remark}
    In formulating the distributed trajectory optimization problem in \eqref{eq:distributed_problem}, we assumed separability of the dynamics constraint in \eqref{eq:discrete_problem_mpc} with respect to the robots' generalized positions and velocities. This assumption simplifies the discussion of our algorithm. In general, our algorithm is not reliant on this assumption. When this assumption fails to hold, the distributed trajectory optimization problem takes a slightly-different form. In particular, each robot would have to keep a local copy of each coupling variable, with additional consistency (equality) constraints between the corresponding optimization variables of the robot and its neighbors. 
\end{remark}

To maximize the computational efficiency of our approach, we apply the SOVA optimization method in \cite{ola2020SOVA} to derive DisCo, a distributed method for \eqref{eq:distributed_problem}. Using SOVA, each robot only computes variables relevant to its local objective and constraint functions rather than computing the entire set of optimization variables, as done in other distributed approaches. Our approach proves particularly efficient in solving the problem in \eqref{eq:distributed_problem}, considering the computation challenges introduced by the non-smooth contact dynamics constraints.
We distribute the problem variables among the robots based on the relevance of each variable to each robot. For example, each robot has no use of the torques applied by the other robots, rendering optimization over these variables unnecessary for each robot. Hence, robot $i$ computes its configuration and the contact body's configuration $\breve{x}_{i}$, the corresponding velocities $\dot{\breve{x}}_{i}$, torques $\tau_{i}$, and its local contact forces $\bm{f}_{i}$.

Upon distributing the problem variables among the robots, we derive update procedures for the optimization variables of each robot from the augmented Lagrangian of \eqref{eq:distributed_problem}. In deriving DisCo, we modify the SOVA algorithm to improve the efficiency of the iterative, numerical optimization procedures. We introduce a positive-definite matrix-valued penalty parameter in the augmented term arising in the augmented Lagrangian. The corresponding augmented Lagrangian $\mcal{L}_{a}$ is given by
\begin{equation}
\label{eq:lagranigan}
\begin{aligned}
\mcal{L}_{a}(\cdot) &= \sum_{i=1}^{N} \Big(\ell_{i}(x_{i},\tau_{i},\bm{f}_{i}) \\
& \hspace{3em} + \sum_{j \in \mcal{N}_{i}} \phi_{ij}^{\tp}(\bm{f}_{i} - a_{ij}) + \nu_{ij}^{\tp}(\bm{f}_{j} - b_{ij}) \\
& \hspace{3em} + \sum_{j \in \mcal{N}_{i}} \gamma_{ij}^{\tp}(\mathring{x}_{i} - u_{ij}) + \eta_{ij}^{\tp}(\mathring{x}_{j} - w_{ij}) \\
& \hspace{3em} + \frac{1}{2} \sum_{j \in \mcal{N}_{i}} \norm{\bm{f}_{i} - a_{ij}}_{W_{f}}^{2} + \norm{\bm{f}_{j} - b_{ij}}_{W_{f}}^{2} \\
& \hspace{3em} + \frac{1}{2} \sum_{j \in \mcal{N}_{i}} \norm{\mathring{x}_{i} - u_{ij}}_{W_{x}}^{2} + \norm{\mathring{x}_{j} - w_{ij}}_{W_{x}}^{2} \Big)
\end{aligned}
\end{equation}
with the dual variables ${\phi_{ij} \in \mathbb{R}^{n_{f}}}$, ${\nu_{ij} \in \mathbb{R}^{n_{f}}}$, ${\gamma_{ij} \in \mathbb{R}^{n_{o}}}$, and ${\eta_{ij} \in \mathbb{R}^{n_{o}}}$ for the equality constraints between the contact forces and the contact body's configuration and velocity of neighboring robots. In addition, ${W_{f} \in \mathbb{R}^{n_{f} \times n_{f}}}$ and  ${W_{x} \in \mathbb{R}^{n_{o} \times n_{o}}}$ denote positive-definite penalty parameters associated with the contact force and the contact body's configuration variables, respectively. Each robot updates its primal optimization variables as the minimizers of the augmented Lagrangian using its dual variables at the previous iteration, before subsequently updating its dual variables via gradient ascent. Robot $i$ solves the local optimization problem
\begin{equation}
\label{eq:primal_update}
\begin{aligned}
\mcal{P}_{L}: &&\minimize{\breve{x}_{i},\dot{\breve{x}}_{i},\tau_{i},\bm{f}_{i}}\ & \mathcal{J}_{i}(x_{i},\tau_{i},\bm{f}_{i})  \\
&&\text{subject to}\ 
&\bm{v}_{i}(\breve{x}_{i},\dot{\breve{x}}_{i},\tau_{i},\bm{f}_{i}) = 0 \\
&&&h_{i}(\breve{x}_{i},\dot{\breve{x}}_{i}) = 0  \\
&&&r_{i}(\breve{x}_{i},\dot{\breve{x}}_{i},\tau_{i},f_{i}) \leq 0
\end{aligned}
\end{equation}
with
\begin{equation}
\begin{aligned}
\mathcal{J}_{i}(\cdot) &= \ell_{i}(x_{i},\tau_{i},\bm{f}_{i}) + p_{i}^{k\tp}\bm{f}_{i} + q_{i}^{k\tp}\mathring{x}_{i} \\
& \hspace{0.35em} + \sum_{j \in \mcal{N}_{i}} \Big\Vert{\bm{f}_{i} - \frac{\bm{f}_{i}^{k} + \bm{f}_{j}^{k}}{2}}\Big\Vert_{W_{f}}^{2} + \Big\Vert{\mathring{x}_{i} - \frac{\mathring{x}_{i}^{k} + \mathring{x}_{j}^{k}}{2}}\Big\Vert_{W_{x}}^{2}\ ,
\end{aligned}
\end{equation}
to update its problem variables.

Notably, the local contact-implicit trajectory optimization problem ($\mcal{P}_{L}$) solved by each robot in \eqref{eq:primal_update} consists of a \emph{single} set of complementarity constraints for the non-smooth contact dynamics of the robot, reducing the numerical challenges to solving the global optimization problem ($\mcal{P}_{C}$) in \eqref{eq:discrete_problem_mpc}. This approach enables the robots to compute a solution to \eqref{eq:primal_update} even in cases when solving \eqref{eq:discrete_problem_mpc} proves particularly difficult. Moreover, each robot does not require the dynamics models, objective functions, and constraints of other robots in \eqref{eq:primal_update}. Since, in general, the optimization variable computed by each robot in \eqref{eq:primal_update} does not change as the number of robots increases, our method scales efficiently to multi-robot problems with large groups of collaborating robots.

After computing its problem variables, robot $i$ shares its local contact body's trajectory $\mathring{x}_{i}^{k+1}$ and contact force $\bm{f}_{i}^{k+1}$ with its neighbors and updates its dual variables ${p_{i} \in \mathbb{R}^{n_{f}}}$ and ${q_{i}\in \mathbb{R}^{n_{o}}}$ using
\begin{equation}
\label{eq:dual_update}
\begin{aligned}
p_{i}^{k+1} &= p_{i}^{k} + W_{f} \sum_{j \in \mcal{N}_{i}} \big(\bm{f}_{i}^{k+1} - \bm{f}_{j}^{k+1}\big) \\
q_{i}^{k+1} &= q_{i}^{k} + W_{x} \sum_{j \in \mcal{N}_{i}} \big(\mathring{x}_{i}^{k+1} - \mathring{x}_{j}^{k+1}\big),
\end{aligned}
\end{equation}
where the dual variables $p_{i}$ and $q_{i}$ represent composite dual variables for $\phi_{ij}$, $\nu_{ij}$, $\gamma_{ij}$, and $\eta_{ij}$. The update procedure in \eqref{eq:dual_update} results from simplification of the dual update procedure described in \cite{ola2020SOVA}.

\begin{remark}
	Although each robot considers only its local constraints in \eqref{eq:primal_update}, the resulting trajectories of the robots and contact body satisfy all the problem constraints in \eqref{eq:distributed_problem} since all the robots compute the same trajectory for the contact body and all the problem constraints are enforced by at least one robot.
\end{remark}

\begin{theorem}
	Assuming the objective function of \eqref{eq:distributed_problem} has Lipschitz continuous gradients, the trajectories of the robots and the contact body converge to a locally optimal solution of \eqref{eq:distributed_problem}.
\end{theorem}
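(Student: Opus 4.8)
The plan is to treat the DisCo updates as a nonconvex consensus ADMM scheme and to exhibit the augmented Lagrangian $\mcal{L}_a$ in \eqref{eq:lagranigan} as a Lyapunov function that descends monotonically along the iterates. Since this is exactly the structure covered by the SOVA method of \cite{ola2020SOVA}, the backbone of the argument is to verify that $\mcal{P}_D$ in \eqref{eq:distributed_problem} meets the assumptions under which that analysis guarantees convergence, and to account for the matrix-valued penalties $W_f$ and $W_x$ introduced in DisCo. Concretely, I would exploit the block-separable objective $\sum_i \ell_i$, the purely local constraints, and the \emph{linear} coupling (consensus) constraints $\bm{f}_i = \bm{f}_j$ and $\mathring{x}_i = \mathring{x}_j$, which place the problem squarely in the consensus-ADMM form.

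First I would establish a sufficient-decrease property for $\mcal{L}_a$. Because each primal update $\mcal{P}_L$ in \eqref{eq:primal_update} returns a (local) minimizer of the augmented Lagrangian over robot $i$'s own variables with the neighbors' variables and the duals held fixed, no primal block update can increase $\mcal{L}_a$. The dual ascent in \eqref{eq:dual_update} does increase $\mcal{L}_a$, so the decisive step is to bound this increase: combining the first-order (KKT) stationarity of the primal subproblem with the Lipschitz continuity of $\nabla \ell_i$ shows that the dual increments $\norm{p_i^{k+1} - p_i^k}$ and $\norm{q_i^{k+1} - q_i^k}$ are controlled by the corresponding successive primal differences. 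Substituting these bounds, the net change $\mcal{L}_a^{k+1} - \mcal{L}_a^k$ is majorized by a negative-definite quadratic in the primal increments, provided $W_f$ and $W_x$ are chosen large enough relative to the Lipschitz constant; the matrix-valued penalties supply exactly the freedom needed to enforce this. This yields monotone descent.

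Next I would argue that $\mcal{L}_a$ is bounded below along the iterates — which follows from coercivity of the local costs $\ell_i$ together with nonnegativity of the penalty terms — so that the monotone sequence $\{\mcal{L}_a^k\}$ converges. Telescoping the sufficient-decrease inequality then forces the primal increments to vanish, $\norm{\bm{f}_i^{k+1} - \bm{f}_i^k} \to 0$ and $\norm{\mathring{x}_i^{k+1} - \mathring{x}_i^k} \to 0$, and through \eqref{eq:dual_update} the consensus residuals $\norm{\bm{f}_i^{k+1} - \bm{f}_j^{k+1}} \to 0$ and $\norm{\mathring{x}_i^{k+1} - \mathring{x}_j^{k+1}} \to 0$ for every edge $(i,j) \in \mcal{E}$; connectivity of $\mcal{G}$ then drives all local copies to a common value. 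Passing to any accumulation point of the bounded iterate sequence, the limiting local KKT conditions aggregate — since every constraint of $\mcal{P}_D$ is enforced by at least one robot and consensus holds in the limit — into the KKT conditions of the full problem $\mcal{P}_D$, which by the equivalence with $\mcal{P}_C$ established in the preceding Proposition is a locally optimal trajectory for the robots and the contact body.

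The main obstacle is the second step. The complementarity constraints inside $r_i(\cdot) \leq 0$ render both the feasible set and each subproblem $\mcal{P}_L$ nonconvex and non-smooth, so the primal updates furnish only local minimizers and the dual-increment bound must be derived from local KKT stationarity rather than from any global optimality certificate. Selecting the penalty matrices $W_f, W_x$ large enough to dominate the resulting curvature and subgradient terms, while keeping the descent inequality valid and the iterates bounded, is the crux; everything downstream (telescoping, residual vanishing, and the KKT limit argument) is then comparatively routine.
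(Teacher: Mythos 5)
Your proposal follows essentially the same route as the paper's proof: the paper simply invokes the nonconvex ADMM convergence analysis of \cite{wang2019global}, asserting monotone decrease of the augmented Lagrangian $\mcal{L}_{a}$ and convergence of the iterates to a stationary point, while you unpack the internals of that very argument (sufficient decrease, dual increments bounded by primal increments via the Lipschitz gradients, lower boundedness, telescoping, vanishing consensus residuals, and a KKT limit). Your closing caveat about the nonconvex complementarity constraints in $r_i(\cdot)\leq 0$ is the one point the paper's proof glosses over, but since the paper likewise leans on the cited reference without resolving it, the two arguments are substantively the same.
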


\begin{proof}
 The theorem follows mostly from \cite{wang2019global}. Here, we briefly highlight the salient points. If the objective function of \eqref{eq:distributed_problem} has Lipschitz continuous gradients, the augmented Lagrangian $\mcal{L}_{a}(\cdot)$ of \eqref{eq:lagranigan} decreases monotonically at every iteration until convergence \cite{wang2019global}. In addition, the sequence ${\{\bm{x}^{k},\dot{\bm{x}}^{k},\bm{\tau}^{k},\bm{f}^{k},p^{k},q^{k}\}}$ converges to a stationary point ${\{\bm{x}^{\star},\dot{\bm{x}}^{\star},\bm{\tau}^{\star},\bm{f}^{\star},p^{\star},q^{\star}\}}$ of $\mcal{L}_{a}(\cdot)$, corresponding to a locally optimal solution of \eqref{eq:distributed_problem}. 
\end{proof}

\begin{algorithm}[th]
	\caption{Distributed Contact-Rich Trajectory Optimization (DisCo)}
	\label{alg:distributed_planning}
	
	\SetKwRepeat{doparallel}{do in parallel}{while}
	
	\doparallel( $i = 1,\cdots,N$) {manipulation task is in progress}{
		$(\breve{x}_{i},\dot{\breve{x}}_{i},\tau_{i},\bm{f}_{i}) \leftarrow$ \textit{OptimizeTrajectory}$(\breve{x}_{i}(t),\dot{\breve{x}}_{i}(t))$ \\
		$\tau_{i}(t) \leftarrow \tau_{i}(0)$ \\
		Apply torque $\tau_{i}(t)$.
	}
	
\end{algorithm}

\begin{algorithm} [th]
    \caption{OptimizeTrajectory($\breve{x}_{i}\text{(}t\text{)},\dot{\breve{x}}_{i}\text{(}t\text{)}$)}
    \label{alg:distributed_update}
	
	\SetKwRepeat{doparallel}{do in parallel}{while}
	
	\textbf{Initialization:} \\
	{\addtolength\leftskip{0.8em}
		$k \leftarrow 0$  \\
		$(p_{i},q_{i})^{0} \leftarrow (0, 0)$ \\[0.35em]
		$(\breve{x}_{i},\dot{\breve{x}}_{i},\tau_{i},\bm{f}_{i})^{0} \leftarrow $ $\underset{\breve{x}_{i},\dot{\breve{x}}_{i},\tau_{i},\bm{f}_{i}}{\mathrm{argminimize}}\big\{$Problem \eqref{eq:primal_update}$\big\}$ \\
	}
	
	\doparallel( $i = 1,\cdots,N$){not converged or stopping criterion is not met}{
		$(\breve{x}_{i},\dot{\breve{x}}_{i},\tau_{i},\bm{f}_{i})^{k+1} \leftarrow $ Problem \eqref{eq:primal_update} \\[0.4em]
		\emph{Communication Procedure}:\\
		{\addtolength\leftskip{0.8em}
			Robot $i$ shares $(\mathring{x}_{i}, \bm{f}_{i})$ with its  neighbors. \\
		}
		$(p_{i},q_{i})^{k+1} \leftarrow $ Procedure \eqref{eq:dual_update} \\[0.4em]
		$k \leftarrow k + 1$		
	}
	\KwRet{$(\breve{x}_{i},\dot{\breve{x}}_{i},\tau_{i},\bm{f}_{i})$}
	
\end{algorithm}

Algorithm \ref{alg:distributed_planning} outlines DisCo for distributed contact-rich trajectory optimization problems.
The robots compute their torques from the \mbox{\textit{OptimizeTrajectory}} procedure for dexterous interaction with the contact body at each time instant, repeating the procedure during the manipulation task, over a receding horizon. Each robot does not share its trajectory and torques with other robots, which reduces the communication bandwidth required for implementation of our method. We warmstart each subsequent instance of the trajectory optimization problem using the optimal solution from the previous instance of the optimization problem, which improves the convergence of the solution, especially in situations where the optimization problem changes slowly over time.
        \section{Simulations}
\label{sec:simulations}

We evaluate our distributed contact-implicit trajectory optimization method DisCo in simulation across a range of multi-robot problems, including collaborative multi-robot object manipulation and tactics planning in robot team games. In the collaborative manipulation setting, we compare the performance of DisCo to centralized contact-implicit trajectory optimization methods \cite{sleiman2019contact, posa2014direct}. We solve the resulting contact-implicit optimization problems on a consumer-grade laptop with an Intel i7 processor using IPOPT \cite{wachter2006implementation}, an interior-point optimization solver, and set the maximum number of iterations at $5000$. In each optimization problem, we use the MA-$57$ linear solver within IPOPT. 

\subsection{Multi-Robot Collaborative Manipulation}
We consider a manipulation problem where $N$ robots manipulate an object to a desired position and orientation, specified by a transformation ${\mathbb{T} \in \SE(d)}$, with ${d = 2}$ or ${d = 3}$. The robots begin from arbitrary locations at the beginning of the manipulation task and apply contact forces on the object to complete the task. We assume the inertial properties of the object and robots are known, in addition to the coefficient of friction. Consequently, we can formulate the multi-robot collaborative manipulation problem in the form given in \eqref{eq:discrete_problem}. We specify constraints on the initial and desired configurations of the robots and the object in the function $\bm{h}(\cdot)$. Further, the distance function $\beta_{i}(\cdot)$ specifies the distance between robot $i$ and the object. In general, the objective function includes a penalty on the deviation of the configuration of the object from its desired configuration, and in general, the objective function is separable, in the sense that it can be expressed as a sum of individual components associated with each robot. Our formulation also includes the specification of collision avoidance constraints, which is often not considered by other algorithms for multi-robot collaborative manipulation in which robots are rigidly attached to the object.

\subsubsection{$\mathrm{SE}(2)$ Manipulation} %
\label{sec:rod_manipulation}
In this problem, we consider a manipulation task with $N$ robots where the robots collaboratively manipulate a rod through contact, by sliding it along a surface with friction. We assume that the coefficient of friction is known. We represent the configuration of the rod as ${x_{\obj} \in \mathrm{SE}(2)}$, parameterized by the rod's position ${x_{\obj,\pos} \in \mathbb{R}^{2}}$ and orientation ${x_{\obj,\orn} \in \mathbb{R}}$, with respect to an inertial frame. Further, we denote the velocity of the object as ${\dot{x}_{\obj} \in \mathbb{R}^{3}}$ (defined with respect to the parameters of its configuration) and the configuration of robot $i$ as ${x_{i} \in \mathbb{R}^{2}}$. The robots manipulate the rod from its initial configuration to a desired position and orientation, given by the transformation ${\mathbb{T}_{\mathrm{des}} \in \SE(2)}$, before bringing the rod to rest, with ${\dot{x}_{\obj}(T) = 0}$ at the end of the manipulation task, where $T$ denotes the duration of the task.  Figure \ref{fig:manip_rod} illustrates the manipulation task performed by four robots.

\begin{figure*}[th]
    \centering
    \begin{subfigure}[t]{.32\linewidth}
        \centering
        \includegraphics[width=\linewidth]{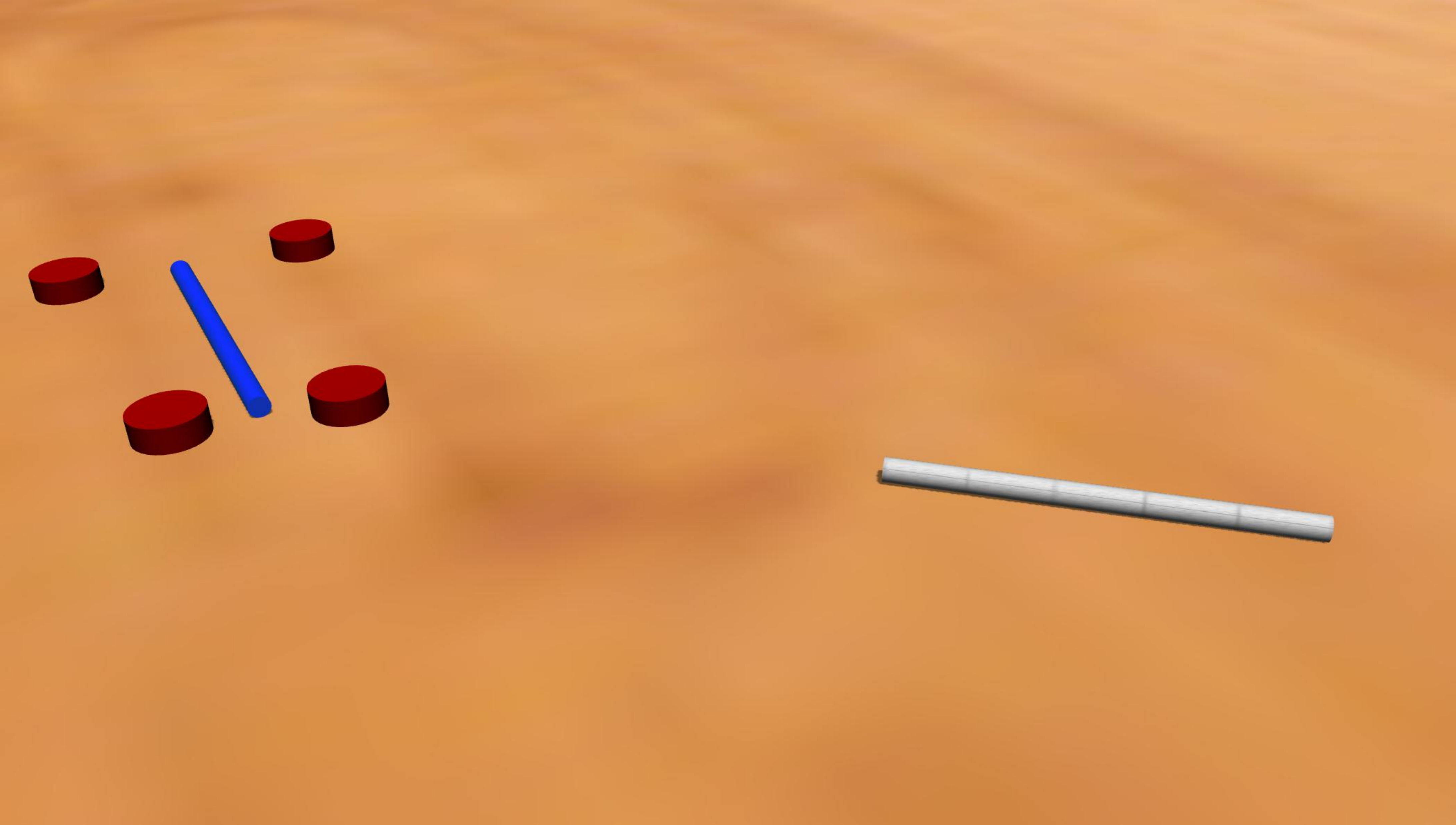}
        \label{fig:start_manip_rod}
    \end{subfigure}
    \hspace{0.35em}
    \begin{subfigure}[t]{.32\linewidth}
        \centering
        \includegraphics[width=\linewidth]{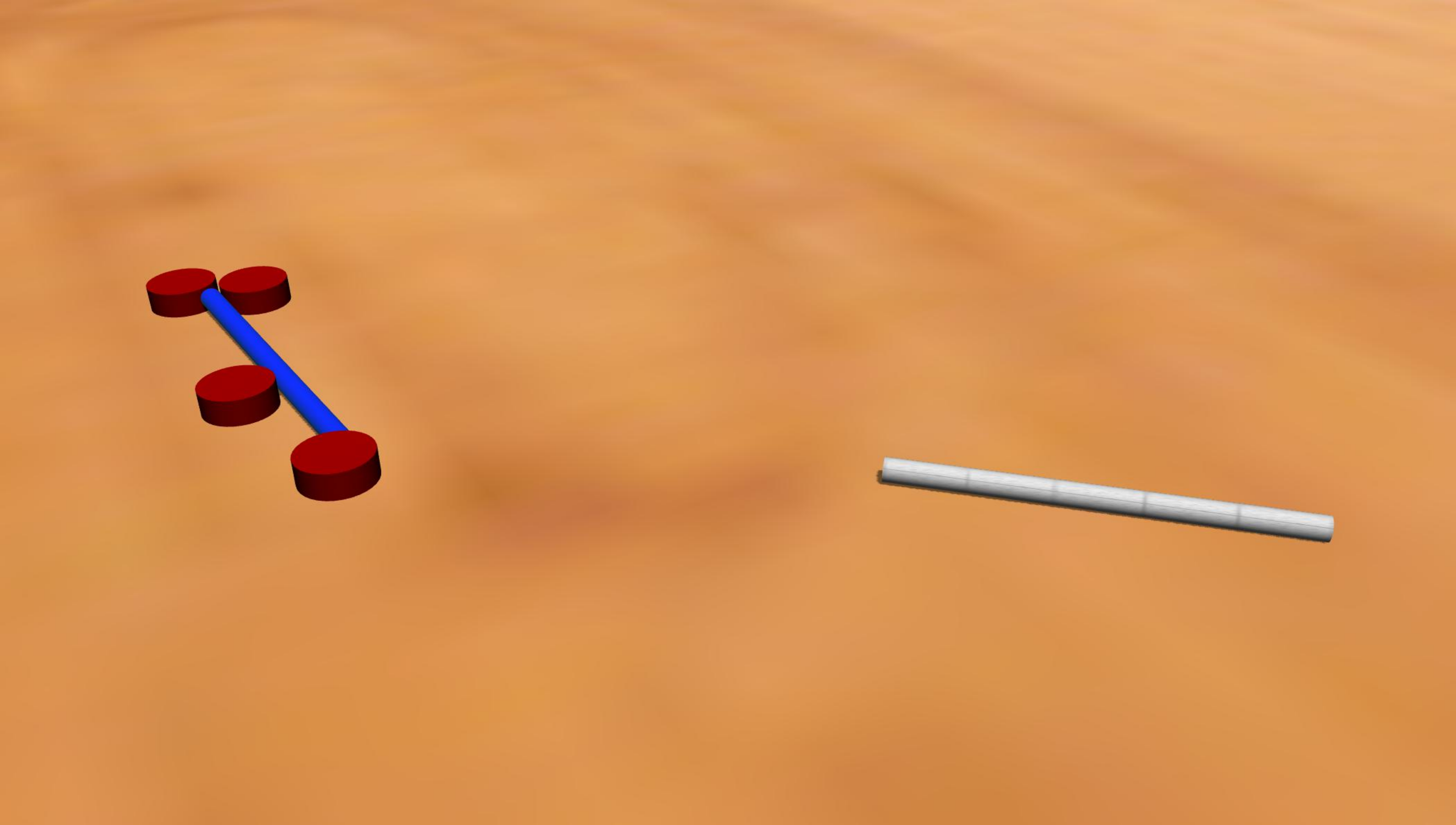}
        \label{fig:contact_manip_rod}
    \end{subfigure}
    \hspace{0.35em}
    \begin{subfigure}[t]{.32\linewidth}
        \centering
        \includegraphics[width=\linewidth]{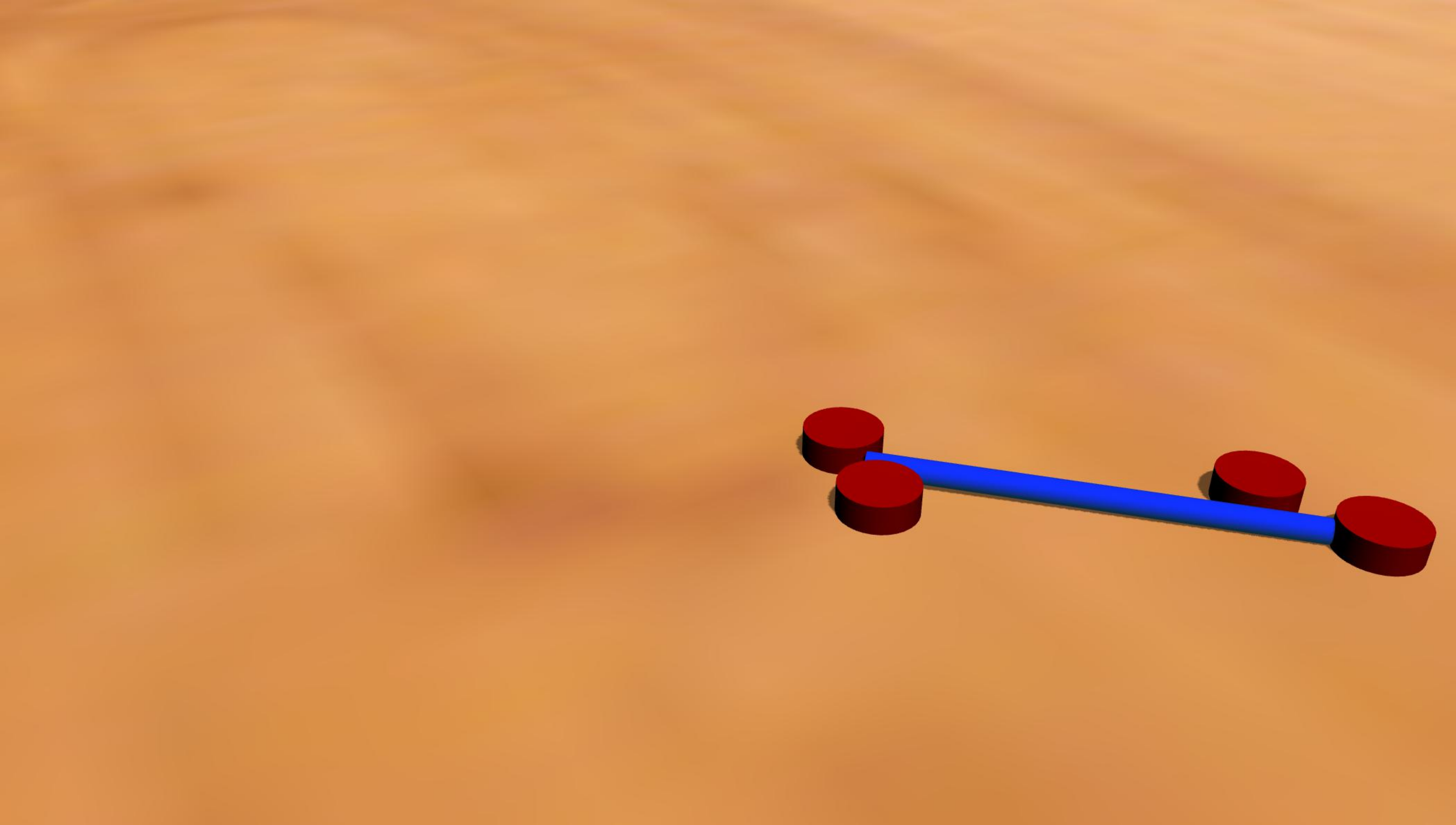}
        \label{fig:end_manip_rod}
    \end{subfigure}
    \caption{A group of $4$ robots (in red) manipulate a rod (in blue) by sliding it along the ground to a desired position and orientation (in gray). The robots begin from arbitrary locations around the rod. (Center) They approach and contact the rod to form a stable grasp, sliding the rod along the ground. (Right) They bring the object to rest at the desired configuration.}
    \label{fig:manip_rod}
\end{figure*}

We consider omnidirectional robots and model the independent (decoupled) dynamics of these robots using a double-integrator dynamics model, given by:
\begin{equation}
    M_{i}\ddot{x}_{i} = \tau_{i},
\end{equation}
where $M_{i}$ denotes the mass matrix of robot $i$, and $\tau_{i}$ denotes its torques. We discretize the dynamics model to obtain the discrete-time dynamics model in \eqref{eq:discrete_dynamics}. 
We constrain the tangential (frictional) components of the contact force using \eqref{eq:contact_tangent}. By representing the contact dynamics between each robot and the object using a set of complementarity constraints, the robots compute their torques from the local optimization problem ($\mcal{P}_{L}$) in \eqref{eq:primal_update},
where: $\bm{v}_{i}(\cdot)$ represents the dynamics constraints of robot $i$ and the rod, $h_{i}(\cdot)$ includes constraints on the initial and desired configuration and velocities of robot $i$ and the object, and $r_{i}(\cdot)$ represents a single set of complementarity constraints describing the contact dynamics of robot $i$ in addition to constraints on its torques and collision-avoidance constraints. 
We consider a quadratic objective function given by
\begin{equation}
\ell_{i}(x_{i},\tau_{i},\bm{f}) = \tau_{i}^{\tp}H_{i}\tau_{i}
\end{equation}
where ${H_{i} \in \mathbb{R}^{n_{\tau,i} \times n_{\tau,i}}}$ denotes a positive definite matrix. The objective function encodes our desire for each robot to manipulate the object using minimal torques, to ultimately minimize energy consumption.

We show the manipulation task in Figure \ref{fig:manip_rod}, where a group of four robots (in red colors) manipulate a blue rod by sliding it along the surface to a desired position and orientation, which is indicated by the gray rod. The robots move the rod through contact before bringing the rod to rest at the end of the manipulation task in Figure \ref{fig:manip_rod}. Videos of the manipulation task are available on the project page.

We examine the success rate of the interior-point solver in solving the contact-implicit optimization problems in the centralized method and DisCo for the manipulation task with $4$ robots. We discretize the optimization problem over $70$ ms intervals, resulting in $1954$ optimization variables in the centralized method, with $1186$ variables per robot in \mbox{DisCo}. A problem is solved successfully if the interior-point solver finds a solution within $5000$ iterations.  We keep the initialization the same across both methods and evaluate the success rate in $100$ manipulation tasks. From Table \ref{table:success_rate_rod_manipulation}, the centralized method attains a $28\%$ success rate, highlighting the difficulty in solving the global contact-implicit trajectory optimization problem ($\mcal{P}_{C}$). In contrast, DisCo achieves a higher success rate of $88\%$, resulting from the decomposition of the global problem into smaller local problems solved by each robot.

\begin{table}[th]
	\centering
	\caption{Success Rate in Solving the $\mathrm{SE}(2)$ Manipulation Problem}
	\begin{tabular}{c c}
		\toprule
		Method & Success Rate ($\%$) \\
		\midrule
		Centralized & $28$ \\ 
		DisCo (Ours) & $88$ \\
		\bottomrule
	\end{tabular}
	\label{table:success_rate_rod_manipulation}
\end{table}

We provide the mean computation time along with the standard deviation using the centralized method and D-COPT in Table \ref{table:computation_time_rod}, with the number of communication rounds in Algorithm~\ref{alg:distributed_update} set at $12$. In DisCo, each robot required a mean time of $0.111$ seconds for each iteration of Algorithm~\ref{alg:distributed_update}, giving a total time of $1.331$ seconds to solve the distributed optimization problem. Meanwhile, the centralized method took a mean computation time of $9.910$ seconds to solve for the torques of all the robots, showing the difficulty in solving the global problem in \eqref{eq:discrete_problem} with all the non-smooth constraints present.

\begin{table}[th]
	\centering
	\caption{Computation Time for the $\mathrm{SE}(2)$ Manipulation Problem}
	\begin{tabular}{c c}
		\toprule
		Method & Time (sec) \\
		\midrule
		Centralized & $7.320 \pm 0.0385$ \\ 
		DisCo (Ours) & $1.331 \pm 0.332$ \\
		\bottomrule
	\end{tabular}
	\label{table:computation_time_rod}
\end{table}

In Figure \ref{fig:normal_contact_force_rod}, we consider a larger-scale task with $16$ robots sliding the rod along a surface and show the normal and tangential components of the contact force applied by each robot. The magnitudes of the normal and tangential components of the contact force indicate the discrete contact interactions between each robot and the object, with non-zero contact forces existing only when the robot makes contact with the object. Further, Figure \ref{fig:normal_contact_force_rod} reveals that only a few robots play a major role in bringing the object to rest at the end of the task, as expected.

\begin{figure}[th]
	\centering
	\includegraphics[width=0.8\columnwidth]{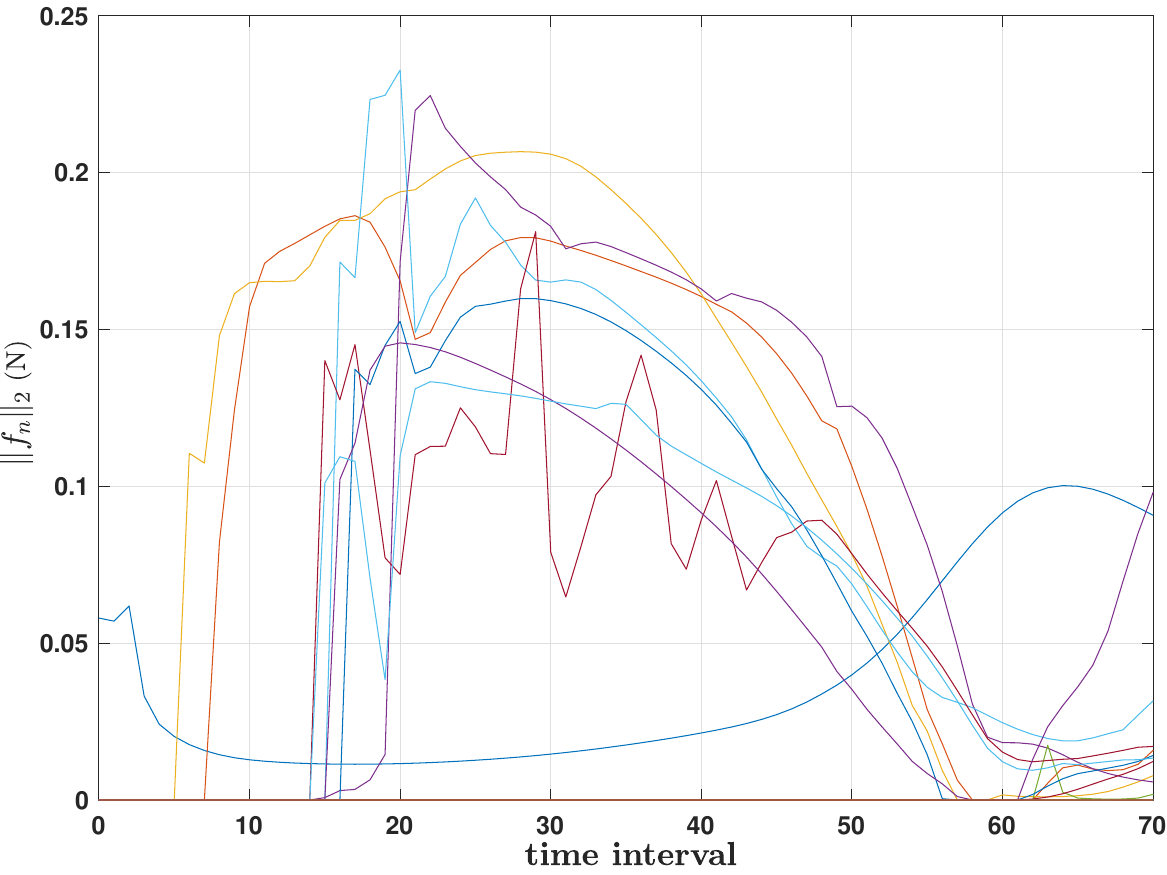}
	\includegraphics[width=0.8\columnwidth]{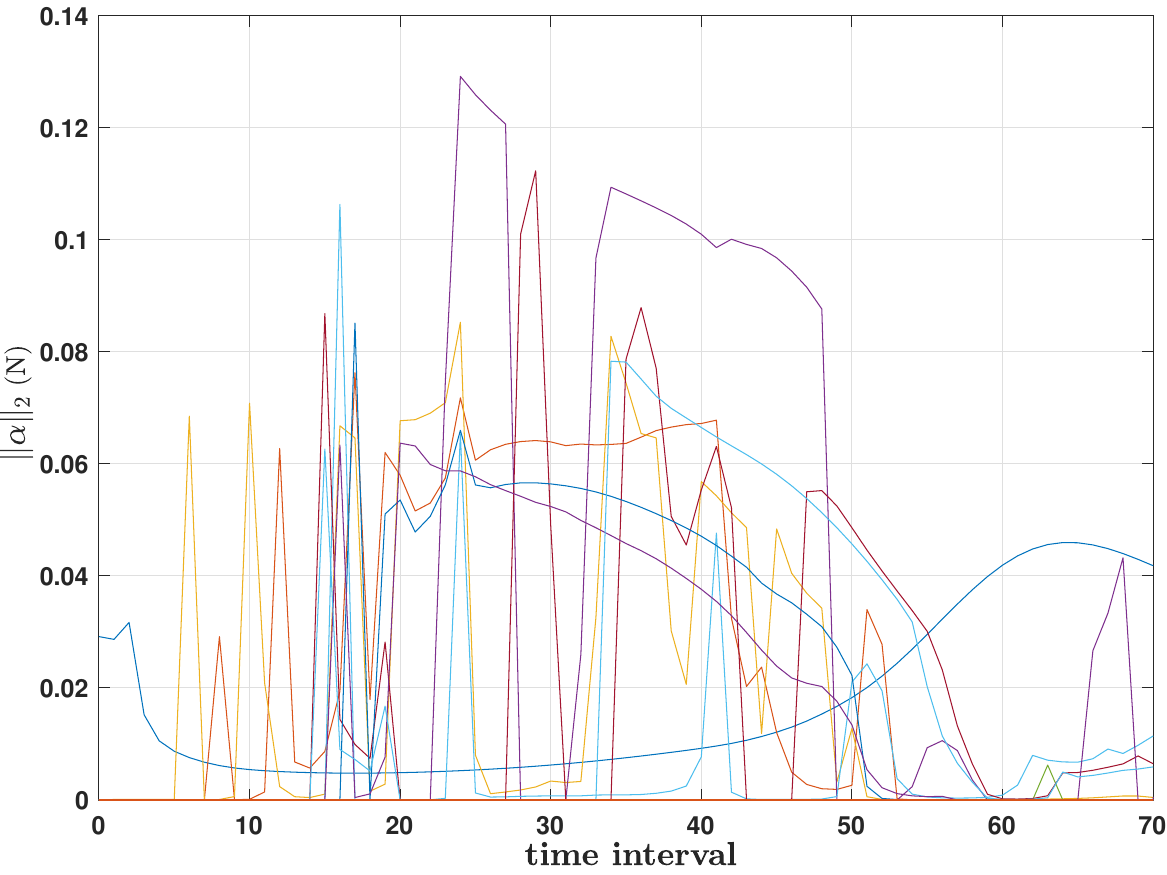}
	\caption{The normal components (top) and tangential components (bottom) of the contact forces applied by each robot, as $16$ robots slide a rod to a desired position and orientation along a surface, showing the discrete contact interactions between each robot and the object as the object slides along its surface.}
	\label{fig:normal_contact_force_rod}
\end{figure}

\subsubsection{$\mathrm{SE}(3)$ Manipulation} %
\label{sec:object_manipulation}
In this manipulation task, $N$ robots manipulate an object falling under gravity, through contact, after it has been dropped from an elevated platform, bringing the object to rest at a desired configuration, specified by a desired transformation ${\mathbb{T}_{\mathrm{des}} \in \SE(3)}$. We represent the orientation of the object at time $t$ by a rotation matrix ${R(t) \in \mathrm{SO}(3)}$ and its angular velocity by ${\omega \in \mathbb{R}^{3}}$. By applying an angular velocity $\omega$ on the object through contact over a time interval of length $\Delta t$, the object rotates through an angle ${\phi \in \mathbb{R}}$ around the rotation axis ${u \in \mathbb{R}^{3}}$. With the assumption that the angular velocity remains constant over each time interval, the change in the rotation  matrix representing the orientation of the object is described by Rodriques' formula,
\begin{equation}
R_{dt} = \cos(\phi)I + (1 - \cos(\phi))u u^{\tp} + \sin(\phi) \hat{\omega}
\end{equation}
where ${I \in \mathbb{R}^{3 \times 3}}$ represents the identity matrix and ${\hat{\omega} \in \mathbb{R}^{3 \times 3}}$ represents a skew-symmetric matrix derived from $\omega$
with ${\phi = \norm{\omega \cdot \Delta t}_{2}}$ and ${u = \frac{\omega \cdot \Delta t}{\phi}}$. The resulting dynamics for the rotation matrix is given by
\begin{equation}
R(t + 1) = R_{dt}R(t),
\end{equation}
with ${R(t + 1) \in \mathrm{SO}(3)}$, which remains a valid rotation matrix.

\begin{figure*}[t]
    \centering
    \begin{subfigure}[t]{.32\linewidth}
        \centering
        \includegraphics[width=\linewidth]{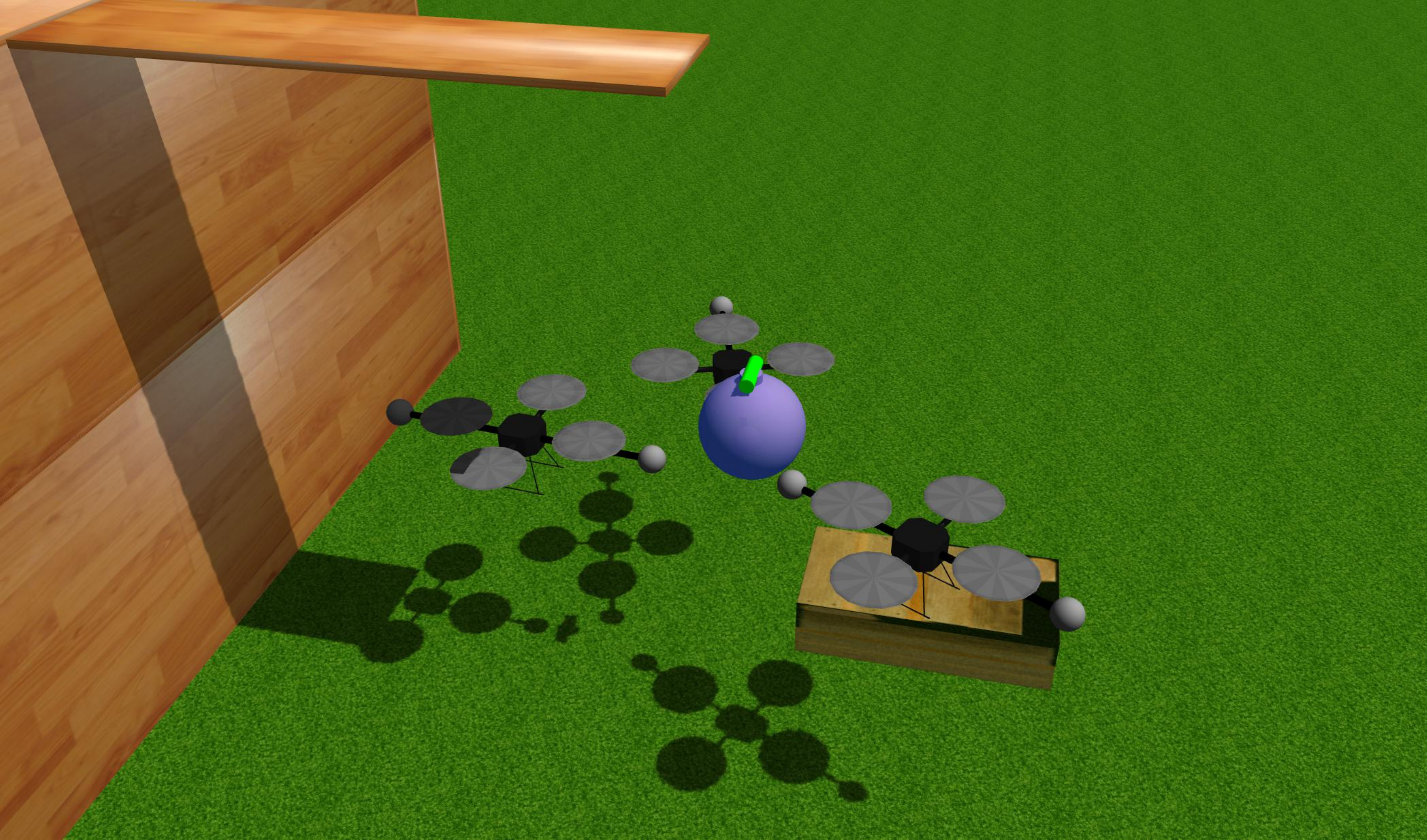}
        \label{fig:start_manip_object}
    \end{subfigure}
    \hspace{0.35em}
    \begin{subfigure}[t]{.32\linewidth}
        \centering
        \includegraphics[width=\linewidth]{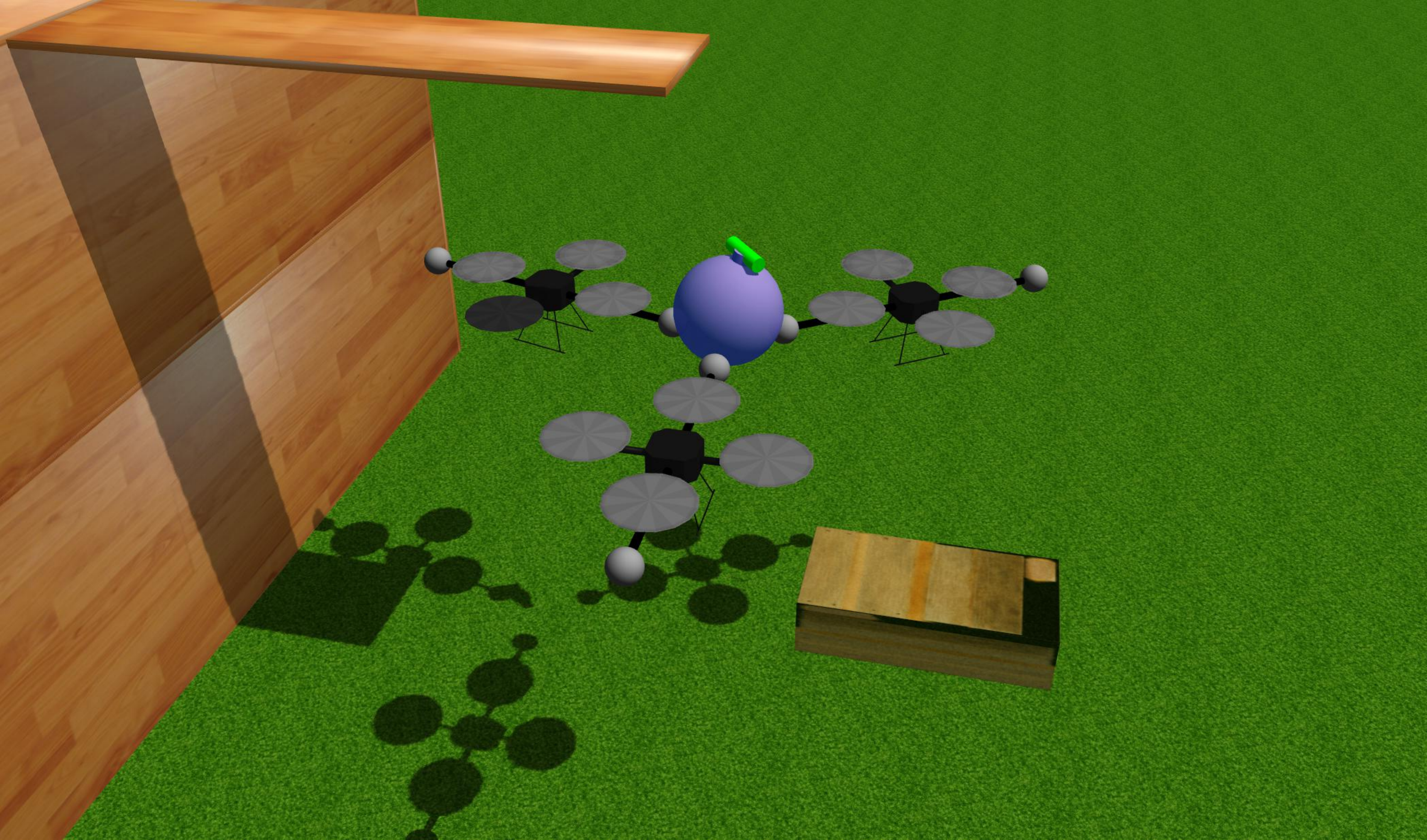}
        \label{fig:manip_object}
    \end{subfigure}
    \hspace{0.35em}
    \begin{subfigure}[t]{.32\linewidth}
        \centering
        \includegraphics[width=\linewidth]{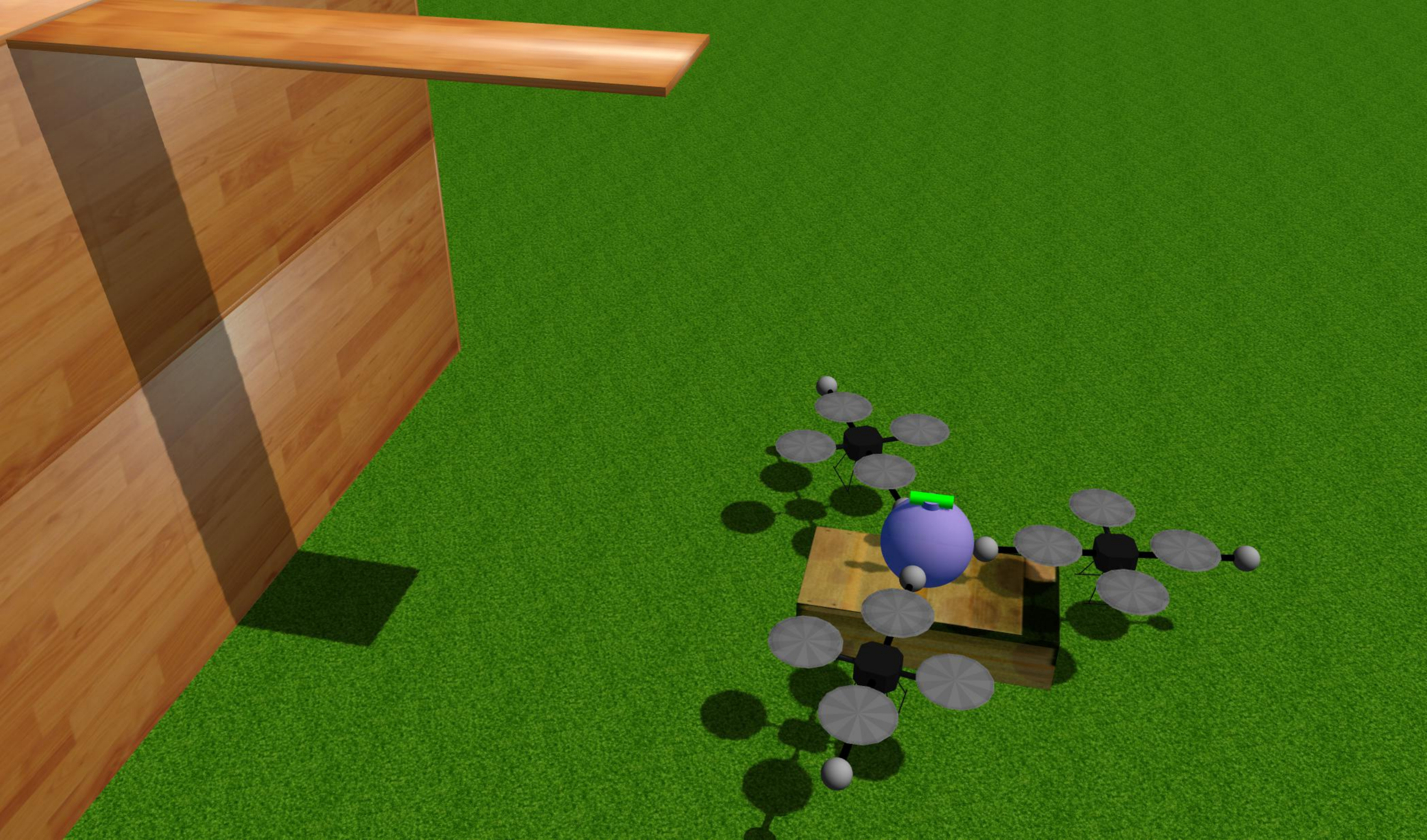}
        \label{fig:end_manip_object}
    \end{subfigure}
    \caption{A group of $3$ quadrotors manipulate a spherical object (purple) released from a platform, as it falls under gravity, to a desired position and orientation on the pedestal. (Left) The quadrotors approach the falling object, (center) gently make contact, and decelerate its fall, and (right) bring the object to rest at its desired configuration on the pedestal.}
    \label{fig:manip_object}
\end{figure*}

To compute the torques required to manipulate the object, each robot solves the local optimization problem in \eqref{eq:primal_update} where the configuration of the object ${x_{\obj}(t) \in \mathrm{SE}(3)}$ is parameterized by its position ${x_{\obj,\pos} \in \mathbb{R}^{3}}$ and orientation, represented with a rotation matrix ${x_{\obj,\orn} \in \mathrm{SO}(3)}$, relative to a fixed inertial frame, and ${\dot{x}_{\obj}(t) \in \mathbb{R}^{6}}$ includes its linear and angular velocities at time $t$ with a single set of complementarity constraints for its contact dynamics. We specify the initial and desired configuration and velocities of robot $i$ and the object in $h_{i}(\cdot)$.

In Figure \ref{fig:manip_object}, a group of $3$ quadrotors manipulate an object as it falls under gravity after it has been released from an elevated platform. The quadrotors contact the object to support and manipulate it to its desired position and orientation, which can be observed from the orientation of the green cap on the object's top. The quadrotors bring the object to rest on the wooden box in Figure \ref{fig:manip_object}. 
We discretize the problem over $70$ ms intervals, resulting in $2430$ optimization variables in the centralized method, with $2082$ variables per robot in D-COPT. We provide the mean computation time, and its standard deviation, of DisCo and the centralized method in Table \ref{table:computation_time_object}. In DisCo, each robot took a mean time of $17.510$ seconds to compute its torques, with each of the $12$ iterations in Algorithm~\ref{alg:distributed_update} requiring an average of $1.459$ seconds. This optimization problem is significantly larger than the problem in Section \ref{sec:rod_manipulation}, therefore requiring more computation time. In contrast, the centralized method required $49.782$ seconds to solve for the torques of all robots, and repeatedly fails to find a solution in many instances of the problem, as noted in Section \ref{sec:rod_manipulation}.

\begin{table}[th]
	\centering
	\caption{Computation Time for the $\mathrm{SE}(3)$ Manipulation Problem}
	\begin{tabular}{c c}
		\toprule
		Method & Time (sec) \\
		\midrule
		Centralized & $49.782 \pm 0.559$ \\ 
		DisCo (Ours) & $17.510 \pm 10.312$ \\
		\bottomrule
	\end{tabular}
	\label{table:computation_time_object}
\end{table}

In Figure \ref{fig:normal_contact_force_object}, we consider a larger-scale problem with $12$ robots manipulating an object released from an elevated platform. The normal and tangential components of the contact forces highlight the discrete contact interactions between each robot and the object. In contrast to the $\mathrm{SE}(2)$ manipulation problem, all the robots involved in the task share relatively the same level of responsibility in bringing the object to rest on the wooden box, as expected in this task.

\begin{figure}[!ht]
	\centering
	\includegraphics[width=0.8\columnwidth]{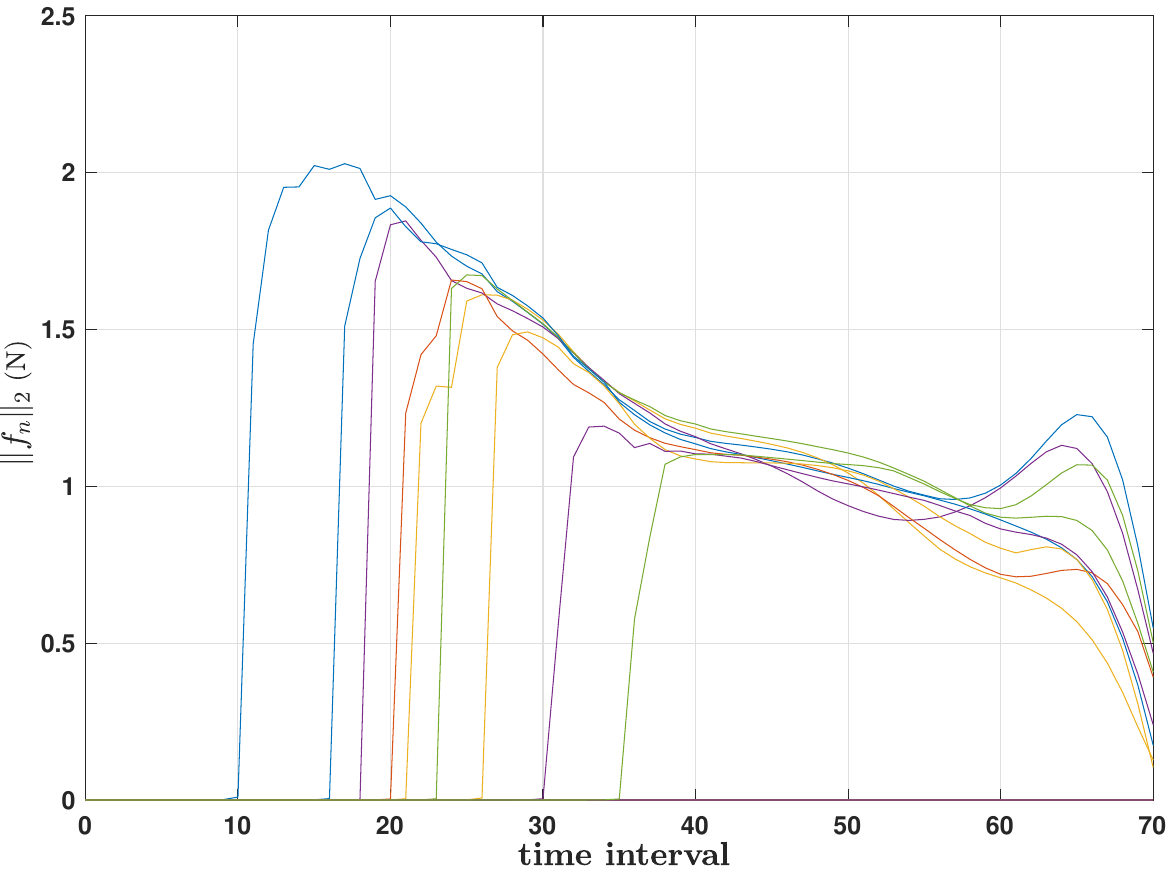}
	\includegraphics[width=0.8\columnwidth]{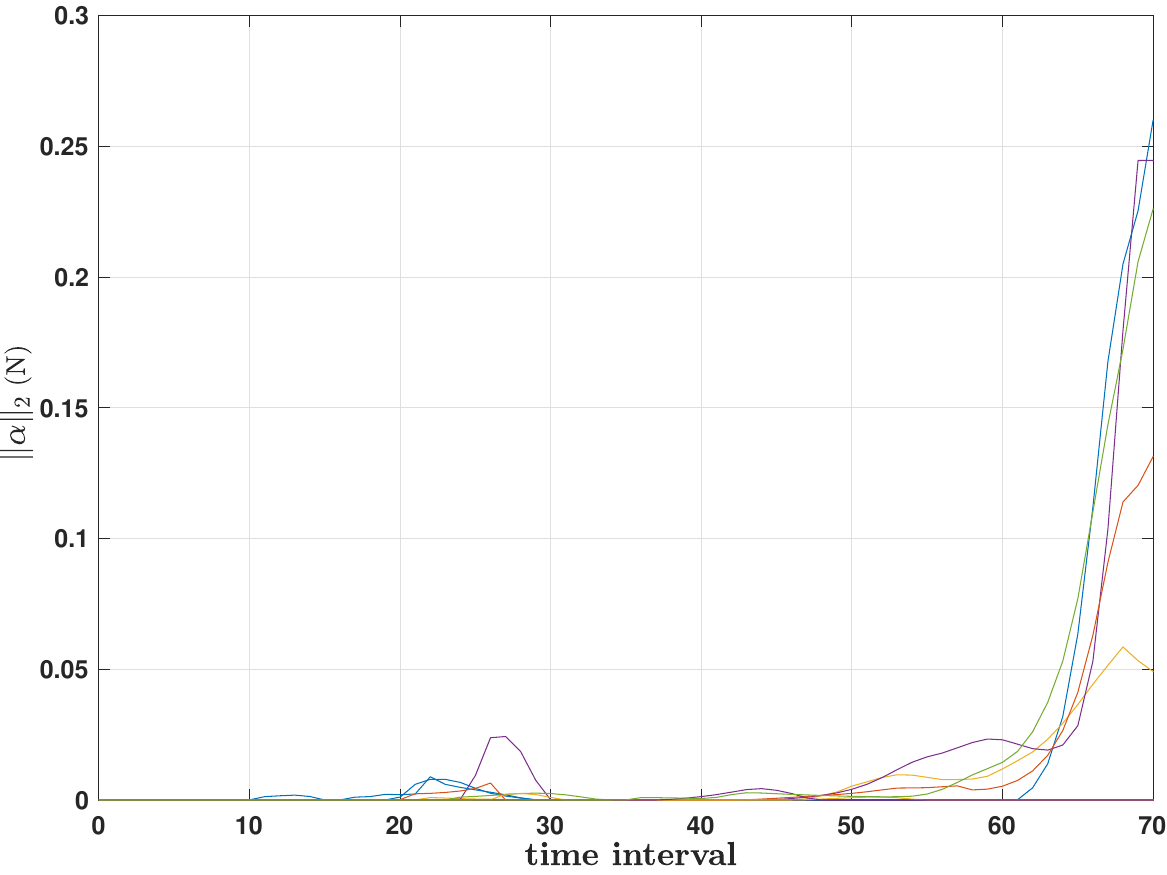}
	\caption{The normal components (top) and tangential components (bottom) of the contact forces applied by each quadrotor on the object as $12$ quadrotors manipulate an object released from an elevated platform, as it falls under gravity. Each quadrotor makes discrete contact interactions with the object to manipulate the object.}
	\label{fig:normal_contact_force_object}
\end{figure}

\begin{figure*}[th]
    \centering
    \begin{subfigure}[t]{.238\linewidth}
        \centering
        \includegraphics[width=\linewidth]{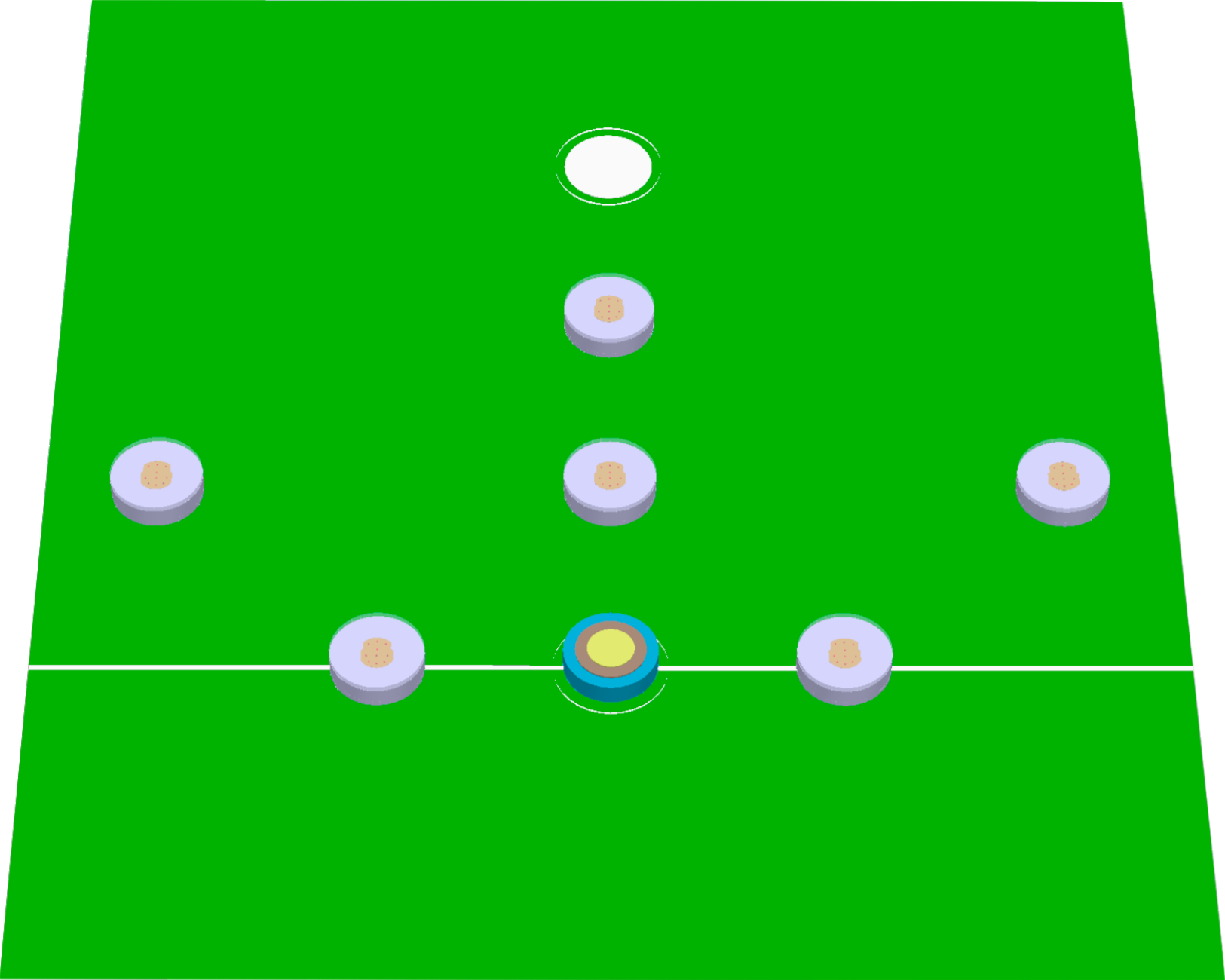}
        \label{fig:start_robot_game}
    \end{subfigure}
    \hspace{0.2em}
    \begin{subfigure}[t]{.238\linewidth}
        \centering
        \includegraphics[width=\linewidth]{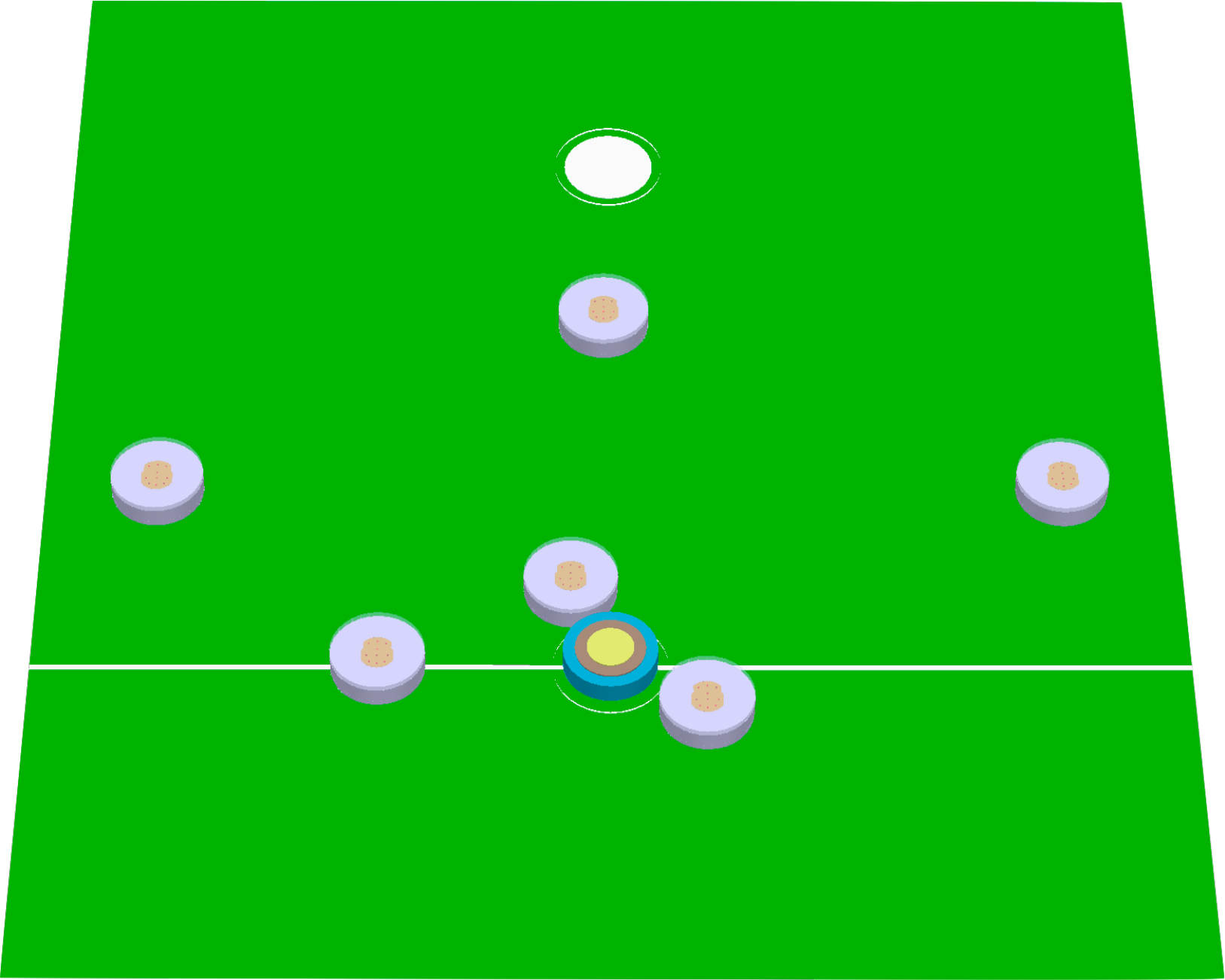}
        \label{fig:short_robot_game}
    \end{subfigure}
    \hspace{0.2em}
    \begin{subfigure}[t]{.238\linewidth}
        \centering
        \includegraphics[width=\linewidth]{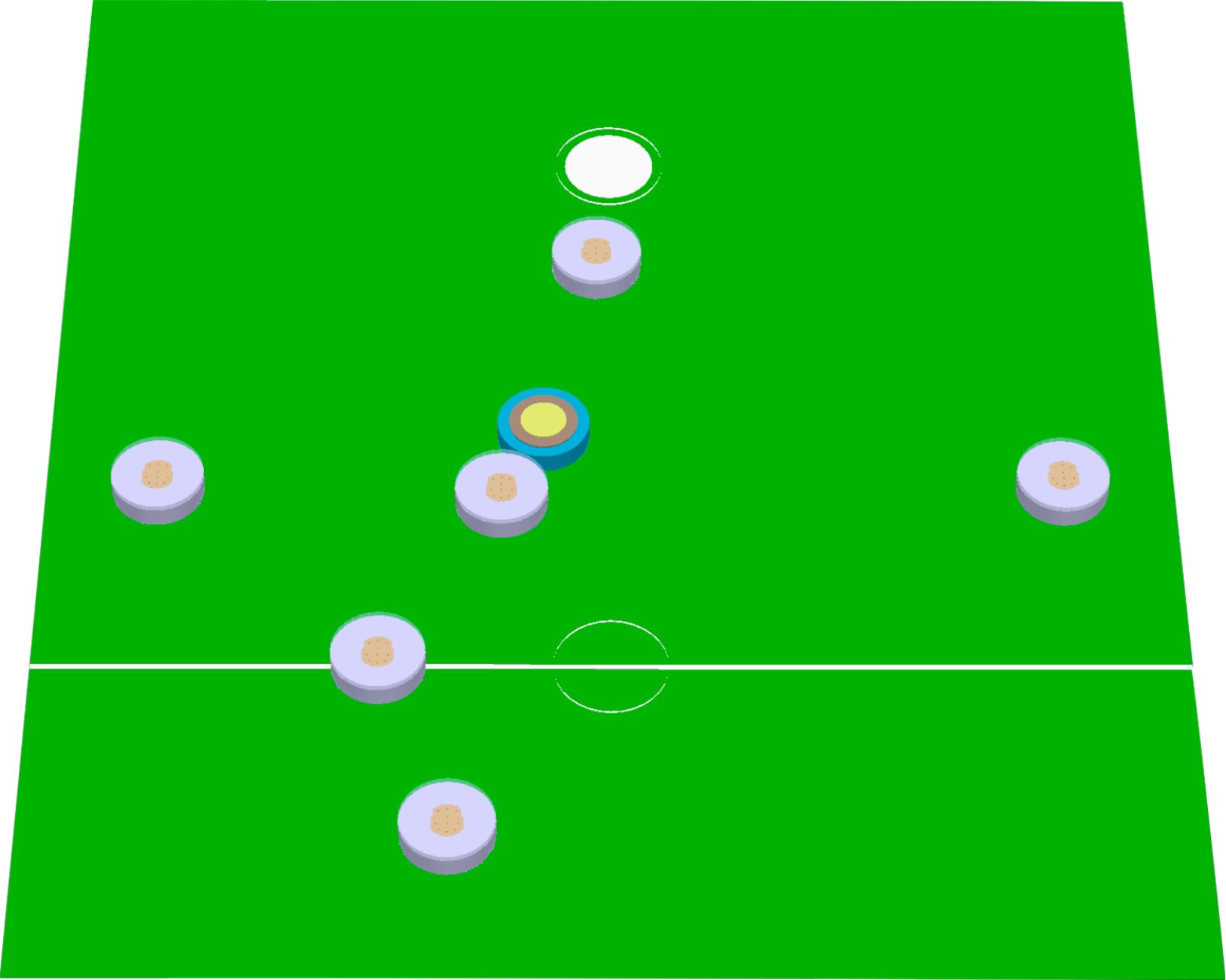}
        \label{fig:long_robot_game}
    \end{subfigure}
    \hspace{0.2em}
    \begin{subfigure}[t]{.238\linewidth}
        \centering
        \includegraphics[width=\linewidth]{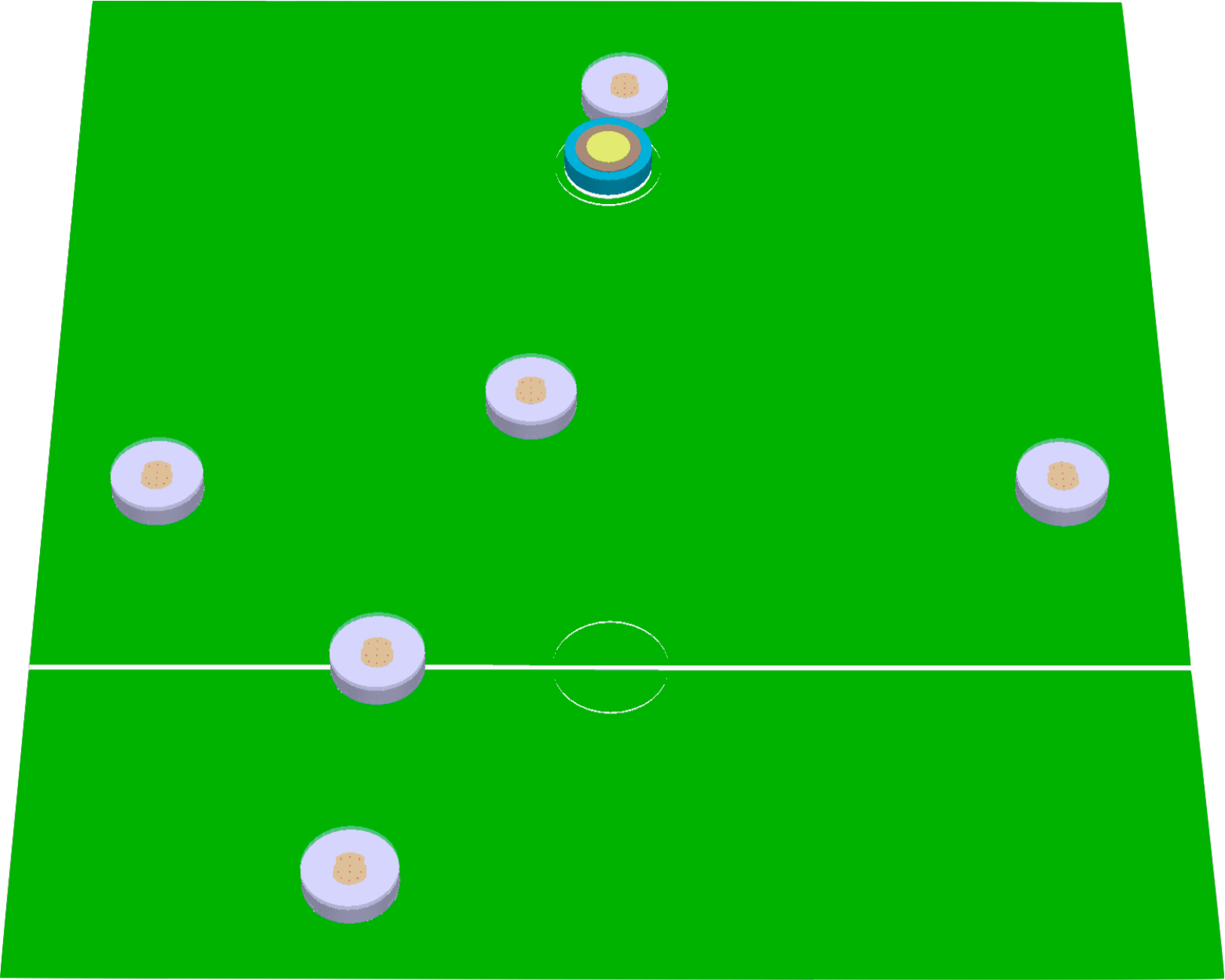}
        \label{fig:goal_robot_game}
    \end{subfigure}
    \caption{A team of robots (in purple) in a hit-and-catch robot team game, with the objective of getting the blue puck (with a yellow top) into the goal (designated by the white circle). 
    (Left) A team of purple robots lineup in a hit-and-catch game before kickoff. (Center-left) A pair of robots complete a short pass to begin the game, after kickoff. (Center-right) A robot attempts a long pass to a teammate, who is close to the goal area. (Right) The teammate traps the puck within the goal area, scoring a point.}
    \label{fig:robot_game_sim}
\end{figure*}

\subsection{Tactics Plannings in Robot Team Sports}
We consider distributed tactics planning in a team of $N$ robots, where each robot passes an object (\emph{puck}) to a teammate by applying an impulse to the object, while the teammate traps the puck by applying an impulse that arrests its motion.
In general, in cooperative hit-and-catch robot team games, the robots move an object into the net in a time-optimal manner, which can be encoded in the objective or constraint function.
We consider planar hit-and-catch games with omnidirectional robots in a 2D workspace and a puck constrained to the plane, but the methods we escribe can be directly extended to games in which the puck or ball moves in a 3D workspace (e.g., soccer). We denote the configuration of the puck as ${x_{\obj} \in \mathbb{R}^{2}}$, representing its position, and in addition, denote the velocity of the puck as ${\dot{x}_{\obj} \in \mathbb{R}^{2}}$. Likewise, we denote the configuration of robot $i$ as ${x_{i} \in \mathbb{R}^{2}}$.
As in Section \ref{sec:rod_manipulation}, we model the independent (decoupled) dynamics of these robots using a double-integrator dynamics model, given by:
\begin{equation}
    M_{i}\ddot{x}_{i} = \tau_{i},
\end{equation}
where $M_{i}$ denotes the mass matrix of robot $i$, and $\tau_{i}$ denotes its torques. Likewise, we discretize the dynamics model to obtain the discrete-time dynamics model in formulating the contact-implicit trajectory optimization problem in \eqref{eq:discrete_problem}. 

We can represent the contact dynamics between the robots and the puck using the set of constraints in \eqref{eq:contact_normal}, \eqref{eq:contact_complementarity}, and \eqref{eq:contact_tangent}, and subsequently, formulate a contact-implicit trajectory optimization problem for this task, taking the same form in \eqref{eq:discrete_problem}. To compute the torques required to complete the task, each robot solves its local optimization problem, given by \eqref{eq:primal_update}, consisting of a single set of complementarity constraints describing its contact dynamics as well as constraints on its initial and desired configurations and that of the puck, while communicating with its neighbors. In this task, we utilize an energy-minimization objective function, similar to that specified in Section \ref{sec:rod_manipulation}.

Figure \ref{fig:robot_game_sim} shows a hit-and-catch robot team game with six robots. The objective of the game is to get the blue puck with the yellow top into the goal area, designated by the white circle, which results in the robots scoring a point. After kickoff, a pair of robots begin the game with a short pass in Figure \ref{fig:robot_game_sim} (Left). The robots pass the puck among themselves to collaboratively get the puck into the goal area. In Figure \ref{fig:robot_game_sim} (Center), a robot makes a long pass to a teammate, who is close to the goal area. The teammate traps the puck within the goal area, completing the pass and scoring a point for the team, depicted in Figure~\ref{fig:robot_game_sim} (Right).

Figure \ref{fig:normal_contact_force_robot_game} shows the normal components of the contact force applied by each robot to the puck during the game. The short-duration spikes in the normal contact force in Figure \ref{fig:normal_contact_force_robot_game} results when a robot hits the puck to attempt a pass, or traps the puck to complete a pass.

\begin{figure}[!ht]
    \centering
    \includegraphics[width=0.8\columnwidth]{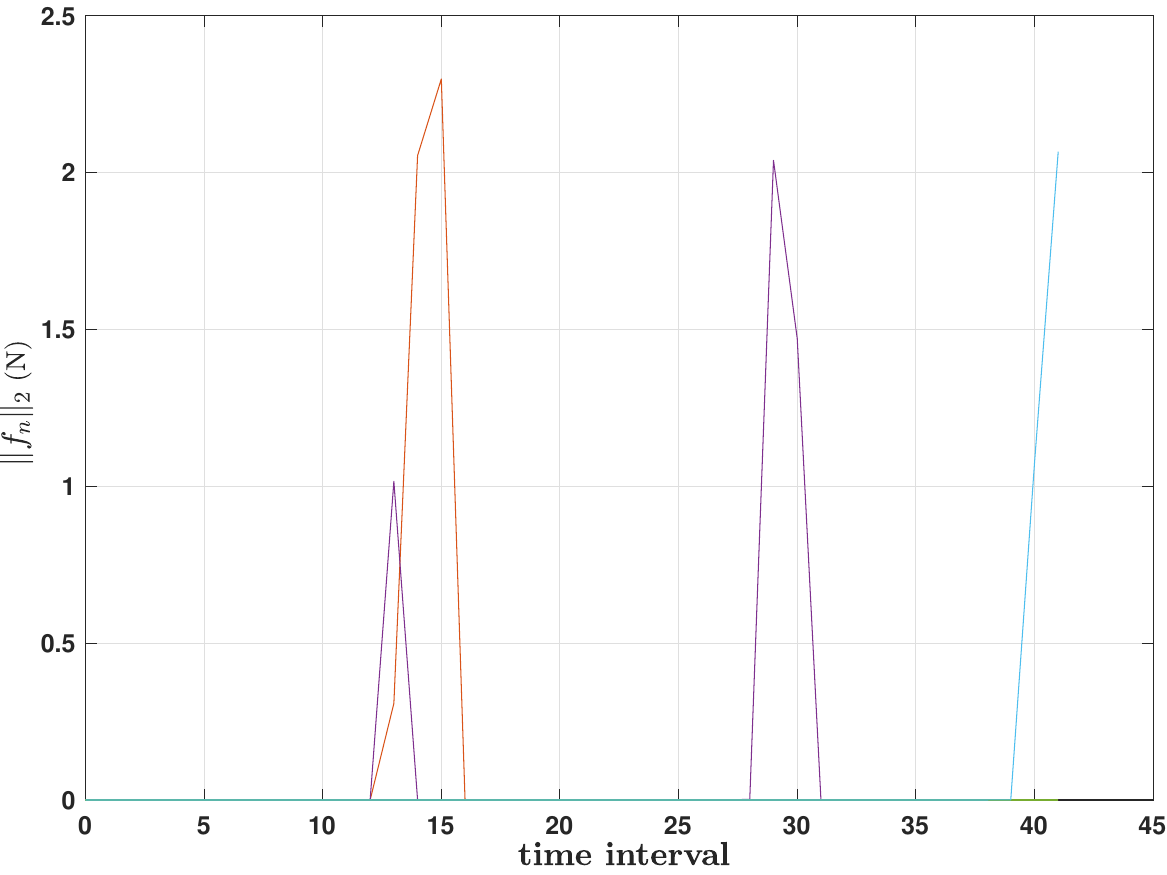}
    \caption{The normal components of the contact forces applied by each robot on the puck in the hit-and-catch robot team game, shown in Figure \ref{fig:robot_game_sim}. A spike in the normal contact force occurs when a robot hits or traps the puck, when attempting or completing a pass.}
    \label{fig:normal_contact_force_robot_game}
\end{figure}

        \section{Hardware Experiments}
\label{sec:experiments}

We evaluate DisCo in hardware experiments. Here, we consider a locomotion problem where we seek to compute the necessary torques to move a modular robot over a terrain, leveraging contact with its environment to accomplish the task. We assume the modular-robot consists of $N$ roller modules, where each module represents an individual agent, with each agent computing its torques locally, while collaborating with other modules. Within the framework of DisCo, we assume that each roller module represents an individual agent with onboard computation resources. 

\smallskip
\noindent \textbf{Truss Robot.}
We utilize a truss-like soft robot that changes shape by the nodes of a constant-perimeter structure \cite{usevitch_untethered_2020}; we refer to it here as the truss robot. The truss robot consists of a $3.4$-meter long tube, inflated with air to a pressure of about $40~\mathrm{kPa}$. The tube is made up of an outer fabric layer enclosing an air-tight bladder. In our experiments, we consider a truss robot with three nodes in the shape of a triangle, depicted in the Figure~\ref{fig:truss_robot_three}. Each node consists of a roller module, two of each are actuated by an electric motor, which power the shape change of the robot.

\begin{figure}[!ht]
    \centering
    \includegraphics[width=0.7\columnwidth, trim={0, 3.7cm, 0, 1.5cm}, clip]{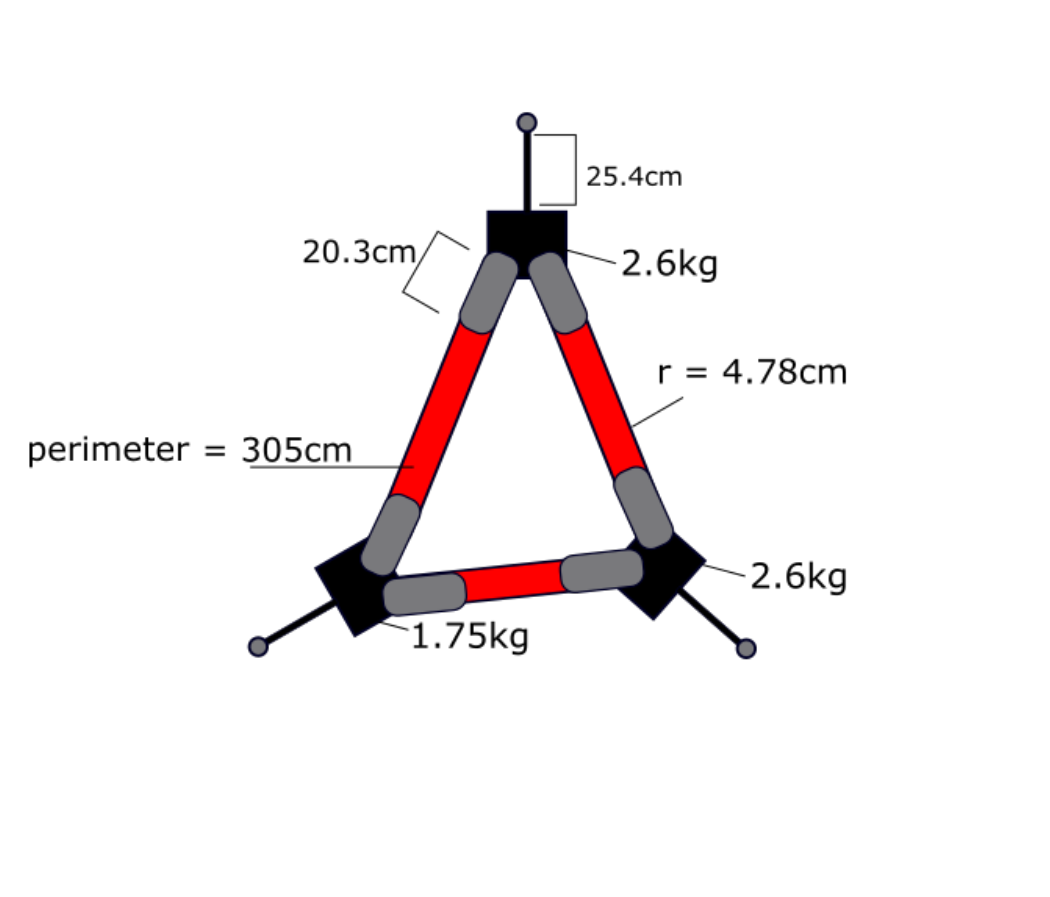}
    \caption{A Truss robot, consisting of three roller modules at each vertex.}
    \label{fig:truss_robot_three}
\end{figure}

\smallskip
\noindent \textbf{Problem Formulation.} 
Given this setup, we can formulate the contact dynamics using the set of constraints in \eqref{eq:contact_normal}, \eqref{eq:contact_complementarity}, and \eqref{eq:contact_tangent}, where the contact surface represents the ground terrain. The contact-implicit trajectory optimization problem takes the same as form as \eqref{eq:discrete_problem}, where the objective function is separable among the $N$ modules. Typically, the objective function consists of components relating to energy consumption, in addition to other components. Further, we note that for robotic locomotion problems, the constraint functions $\bm{h}$ and $\bm{r}$ include geometric constraints, specifying the geometry of the modular robot, including the connection between different modules.

We represent the dynamics of the truss robot using the model:
\begin{equation}
    m_{i} \ddot{x}_{i} = \sum_{j \in \mcal{N}_{i}} \tau_{i, j} \frac{x_{i} - x_{j}}{\norm{x_{i} - x_{j}}} + f_{c, i} + g_{i},
\end{equation}	
where ${m_{i} \in \mathbb{R}}$ denotes the mass of the module, ${\tau_{i, j} \in \mathbb{R}}$ denotes the torque applied at the prismatic joint along edge ${e = (i, j)}$ connecting nodes $i$ and $j$, ${f_{c, i}}$ denotes the contact force applied to node $i$, and $g_{i}$ denotes its gravitational force. In this model, we assume that each edge represents a prismatic link, i.e., the truss robot changes the lengths of its edges by applying a torque in the direction of the associated edge.
We execute DisCo to compute reference trajectories for the nodes of the truss robot for locomotion on a flat terrain from an initial location to a desired configuration. We express the desired configuration of the truss robot in terms of the location of its center-of-mass.

\smallskip
\noindent \textbf{Hardware Implementation.}
For locomotion on flat terrain, we selected a matte fiber-reinforced plastic sheet (Home Depot No.~63003) to create suitable visual contrast. Further, we reinforced the truss robot with aluminum angle extrusions in order to resist potential out-of-plane motion. Because only two of the three nodes contain motors, we added an additional $1~\mathrm{kg}$ of mass to the motor-free node to equalize all three node weights. 
We communicate with the truss robot over an nRF24l01+ radio attached to a Microsoft Surface Book 2 laptop. Each of the truss robot nodes listens for commands from the laptop to rotate in order to lengthen or shorten edge lengths. The front camera of the laptop is used with Multitracker and OpenCV \cite{opencv_library} computer vision libraries. 
We send commands at a frequency of $2~\mathrm{Hz}$ and compare the measured length between each of the three nodes of the truss robot and that of the optimized trajectory to determine the control commands.
The truss robot motors are set to move at a constant speed until the node is measured within $10\%$ of the optimized trajectory. Once two of the edges achieve their target length, the process loops to the next set of edge lengths as given by the optimized trajectory. All of these measured and targeted edge lengths are recorded in a time series through the script, and processed in MATLAB. 

We provide qualitative results in Figure~\ref{fig:truss_robot}, where we show the truss robot in a rolling gait as it moves along a flat terrain from its initial configuration (with its center-of-mass position denoted in green) to a desired configuration (with its center-of-mass denoted in yellow). In the top row, we show the optimized trajectory of the truss robot, computed in simulation. We assign a unique color to each node and display each node's estimate of the composite shape of the truss robot. From Figure~\ref{fig:truss_robot}, we note that all nodes achieve consensus on the optimized trajectory of the truss robot, with the triangles (representing the truss) computed by each robot overlapping in each frame of the video. Further, in the bottom row in Figure~\ref{fig:truss_robot}, we provide time-matched frames of the robot executing the reference trajectory in the real-world. The frames from the real-world experiment closely mirror the simulation results.
In addition, we show the tracking results in Figure~\ref{fig:truss_robot_quant}. The truss robot exhibits good tracking performance while executing the rolling gait. The observed tracking errors may be due to drift from stretching of the soft inflated body or out-of-plane effects not captured by the single camera system used in computing the control commands. Future work will seek to address these challenges by implementing a higher-fidelity low-level tracking controller.

\begin{figure}[ht]
	\centering
	\includegraphics[width=0.95\columnwidth]{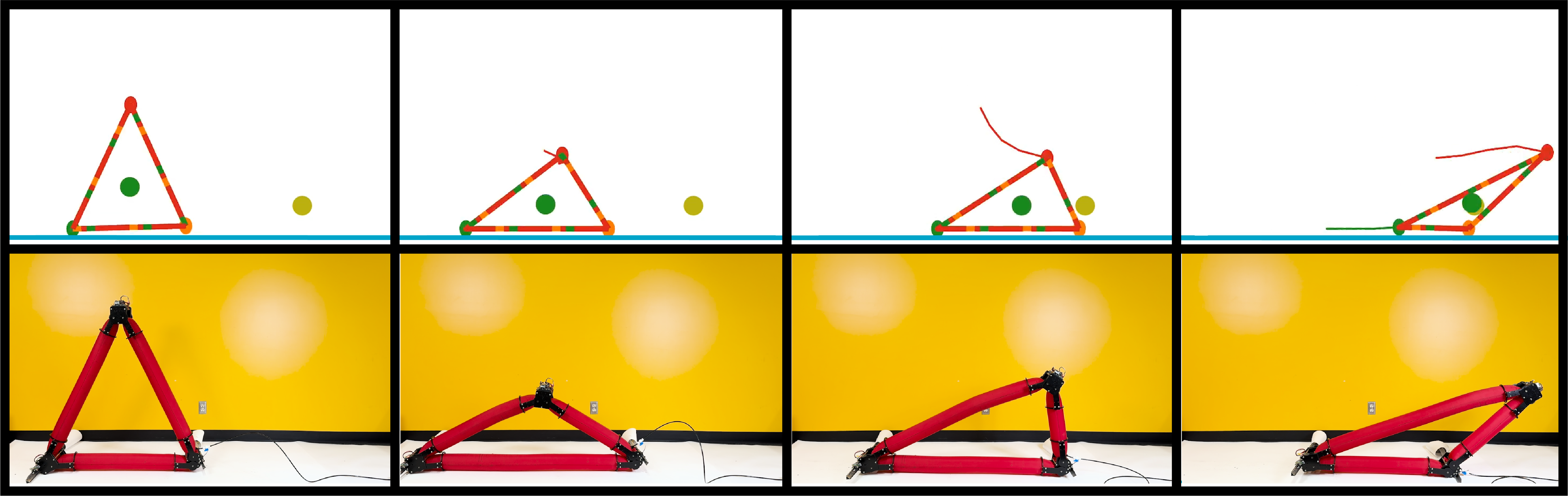}
	\caption{Deployment of DisCo on an inflatable truss robot for locomotion on a flat terrain. The top row shows the optimized simulation results, while the bottom row shows the time-matched real-world results.}
	\label{fig:truss_robot}
\end{figure}

\begin{figure}[ht]
	\centering
	\includegraphics[trim={0 0 0 15ex}, clip,width=0.95\columnwidth]{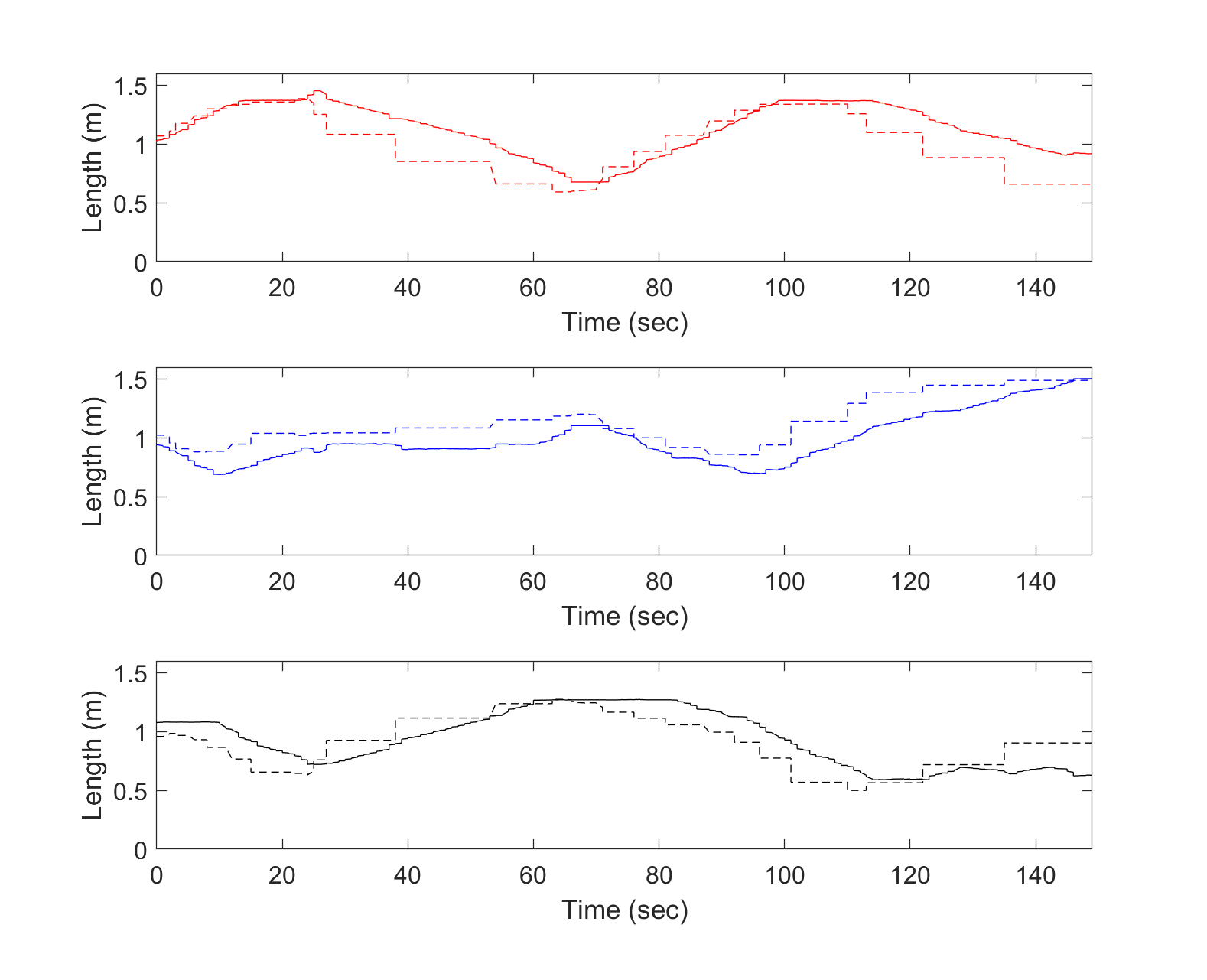}
	\caption{Measurement of the truss robot's edge lengths (solid line) compared to the optimized trajectory (dashed line), one node per row. Possible drift may have occurred from the slower high-torque motors,  or out-of-plane affects not captured from the single camera system.}
	\label{fig:truss_robot_quant}
\end{figure}

	\section{Conclusion}
\label{sec:conclusion}
We present DisCo an distributed algorithm for contact-rich trajectory optimization problems, enabling a group of robots to leverage dexterous contact interactions to collaboratively perform multi-robot tasks, such as collaborative manipulation, tactics planning in robot team sports, and modular-robot locomotion. By separating the objective functions and non-smooth dynamics constraints among the robots, each robot solves a local optimization problem only considering a single set of complementarity constraints for its contact dynamics, overcoming the notable numerical optimization challenges to solving the global contact-implicit trajectory optimization problem. Particularly, our method improves the success rates of centralized contact-implicit trajectory optimization solvers, while requiring shorter computation times. Each robot communicates with its neighbors to compute its torques and contact forces, without computing the torques and contact forces of other robots, allowing our method to scale efficiently to larger-scale problems. Future work will seek to extend our proposed method to a broader class of contact-implicit problems where decoupling the dynamics might be infeasible, while improving its computation speed and scalability.
	
	\bibliography{references}

\begin{thebibliography}{10}
\providecommand{\url}[1]{#1}
\csname url@samestyle\endcsname
\providecommand{\newblock}{\relax}
\providecommand{\bibinfo}[2]{#2}
\providecommand{\BIBentrySTDinterwordspacing}{\spaceskip=0pt\relax}
\providecommand{\BIBentryALTinterwordstretchfactor}{4}
\providecommand{\BIBentryALTinterwordspacing}{\spaceskip=\fontdimen2\font plus
\BIBentryALTinterwordstretchfactor\fontdimen3\font minus
  \fontdimen4\font\relax}
\providecommand{\BIBforeignlanguage}[2]{{%
\expandafter\ifx\csname l@#1\endcsname\relax
\typeout{** WARNING: IEEEtran.bst: No hyphenation pattern has been}%
\typeout{** loaded for the language `#1'. Using the pattern for}%
\typeout{** the default language instead.}%
\else
\language=\csname l@#1\endcsname
\fi
#2}}
\providecommand{\BIBdecl}{\relax}
\BIBdecl

\bibitem{usevitch_untethered_2020}
\BIBentryALTinterwordspacing
N.~S. Usevitch, Z.~M. Hammond, M.~Schwager, A.~M. Okamura, E.~W. Hawkes, and
  S.~Follmer, ``\BIBforeignlanguage{en}{An untethered isoperimetric soft
  robot},'' \emph{\BIBforeignlanguage{en}{Science Robotics}}, vol.~5, no.~40,
  p. eaaz0492, Mar. 2020. [Online]. Available:
  \url{https://robotics.sciencemag.org/lookup/doi/10.1126/scirobotics.aaz0492}
\BIBentrySTDinterwordspacing

\bibitem{ola2020SOVA}
O.~Shorinwa, T.~Halsted, and M.~Schwager, ``Scalable distributed optimization
  with separable variables in multi-agent networks,'' in \emph{2020 American
  Control Conference (ACC)}.\hskip 1em plus 0.5em minus 0.4em\relax IEEE, 2020,
  pp. 3619--3626.

\bibitem{shorinwa2021distributed}
O.~Shorinwa and M.~Schwager, ``Distributed contact-implicit trajectory
  optimization for collaborative manipulation,'' in \emph{2021 International
  Symposium on Multi-Robot and Multi-Agent Systems (MRS)}.\hskip 1em plus 0.5em
  minus 0.4em\relax IEEE, 2021, pp. 56--65.

\bibitem{winkler2018gait}
A.~W. Winkler, C.~D. Bellicoso, M.~Hutter, and J.~Buchli, ``Gait and trajectory
  optimization for legged systems through phase-based end-effector
  parameterization,'' \emph{IEEE Robotics and Automation Letters}, vol.~3,
  no.~3, pp. 1560--1567, 2018.

\bibitem{mordatch2012discovery}
I.~Mordatch, E.~Todorov, and Z.~Popovi{\'c}, ``Discovery of complex behaviors
  through contact-invariant optimization,'' \emph{ACM Transactions on Graphics
  (TOG)}, vol.~31, no.~4, pp. 1--8, 2012.

\bibitem{carpentier2018multicontact}
J.~Carpentier and N.~Mansard, ``Multicontact locomotion of legged robots,''
  \emph{IEEE Transactions on Robotics}, vol.~34, no.~6, pp. 1441--1460, 2018.

\bibitem{aceituno2017simultaneous}
B.~Aceituno-Cabezas, C.~Mastalli, H.~Dai, M.~Focchi, A.~Radulescu, D.~G.
  Caldwell, J.~Cappelletto, J.~C. Grieco, G.~Fern{\'a}ndez-L{\'o}pez, and
  C.~Semini, ``Simultaneous contact, gait, and motion planning for robust
  multilegged locomotion via mixed-integer convex optimization,'' \emph{IEEE
  Robotics and Automation Letters}, vol.~3, no.~3, pp. 2531--2538, 2017.

\bibitem{stewart2000implicit}
D.~Stewart and J.~C. Trinkle, ``An implicit time-stepping scheme for rigid body
  dynamics with coulomb friction,'' in \emph{Proceedings 2000 ICRA. Millennium
  Conference. IEEE International Conference on Robotics and Automation.
  Symposia Proceedings (Cat. No. 00CH37065)}, vol.~1.\hskip 1em plus 0.5em
  minus 0.4em\relax IEEE, 2000, pp. 162--169.

\bibitem{sleiman2019contact}
J.-P. Sleiman, J.~Carius, R.~Grandia, M.~Wermelinger, and M.~Hutter,
  ``Contact-implicit trajectory optimization for dynamic object manipulation,''
  in \emph{2019 IEEE/RSJ International Conference on Intelligent Robots and
  Systems (IROS)}.\hskip 1em plus 0.5em minus 0.4em\relax IEEE, 2019, pp.
  6814--6821.

\bibitem{posa2014direct}
M.~Posa, C.~Cantu, and R.~Tedrake, ``A direct method for trajectory
  optimization of rigid bodies through contact,'' \emph{The International
  Journal of Robotics Research}, vol.~33, no.~1, pp. 69--81, 2014.

\bibitem{zhou2020accelerated}
Z.~Zhou and Y.~Zhao, ``Accelerated admm based trajectory optimization for
  legged locomotion with coupled rigid body dynamics,'' in \emph{2020 American
  Control Conference (ACC)}.\hskip 1em plus 0.5em minus 0.4em\relax IEEE, 2020,
  pp. 5082--5089.

\bibitem{drnach2021robust}
L.~Drnach and Y.~Zhao, ``Robust trajectory optimization over uncertain terrain
  with stochastic complementarity,'' \emph{IEEE Robotics and Automation
  Letters}, vol.~6, no.~2, pp. 1168--1175, 2021.

\bibitem{onol2018comparative}
A.~O. Onol, P.~Long, and T.~Padlr, ``A comparative analysis of contact models
  in trajectory optimization for manipulation,'' in \emph{2018 IEEE/RSJ
  International Conference on Intelligent Robots and Systems (IROS)}.\hskip 1em
  plus 0.5em minus 0.4em\relax IEEE, 2018, pp. 1--9.

\bibitem{neunert2017trajectory}
M.~Neunert, F.~Farshidian, A.~W. Winkler, and J.~Buchli, ``Trajectory
  optimization through contacts and automatic gait discovery for quadrupeds,''
  \emph{IEEE Robotics and Automation Letters}, vol.~2, no.~3, pp. 1502--1509,
  2017.

\bibitem{carius2019trajectory}
J.~Carius, R.~Ranftl, V.~Koltun, and M.~Hutter, ``Trajectory optimization for
  legged robots with slipping motions,'' \emph{IEEE Robotics and Automation
  Letters}, vol.~4, no.~3, pp. 3013--3020, 2019.

\bibitem{carius2018trajectory}
------, ``Trajectory optimization with implicit hard contacts,'' \emph{IEEE
  Robotics and Automation Letters}, vol.~3, no.~4, pp. 3316--3323, 2018.

\bibitem{chatzinikolaidis2020contact}
I.~Chatzinikolaidis, Y.~You, and Z.~Li, ``Contact-implicit trajectory
  optimization using an analytically solvable contact model for locomotion on
  variable ground,'' \emph{IEEE Robotics and Automation Letters}, vol.~5,
  no.~4, pp. 6357--6364, 2020.

\bibitem{shorinwa2024distributedopttutorial}
O.~Shorinwa, T.~Halsted, J.~Yu, and M.~Schwager, ``Distributed optimization
  methods for multi-robot systems: Part 1—a tutorial,'' \emph{IEEE Robotics
  \& Automation Magazine}, 2024.

\bibitem{shorinwa2024distributedoptsurvey}
------, ``Distributed optimization methods for multi-robot systems: Part 2—a
  survey,'' \emph{IEEE Robotics \& Automation Magazine}, 2024.

\bibitem{halsted2021survey}
T.~Halsted, O.~Shorinwa, J.~Yu, and M.~Schwager, ``A survey of distributed
  optimization methods for multi-robot systems,'' \emph{arXiv preprint
  arXiv:2103.12840}, 2021.

\bibitem{hichri2016cooperative}
B.~Hichri, L.~Adouane, J.-C. Fauroux, Y.~Mezouar, and I.~Doroftei,
  ``Cooperative mobile robot control architecture for lifting and
  transportation of any shape payload,'' in \emph{Distributed Autonomous
  Robotic Systems}.\hskip 1em plus 0.5em minus 0.4em\relax Springer, 2016, pp.
  177--191.

\bibitem{bais2015dynamic}
A.~Z. Bais, S.~Erhart, L.~Zaccarian, and S.~Hirche, ``Dynamic load distribution
  in cooperative manipulation tasks,'' in \emph{2015 IEEE/RSJ International
  Conference on Intelligent Robots and Systems (IROS)}.\hskip 1em plus 0.5em
  minus 0.4em\relax IEEE, 2015, pp. 2380--2385.

\bibitem{ortenzi2018dual}
D.~Ortenzi, R.~Muthusamy, A.~Freddi, A.~Monteri{\`u}, and V.~Kyrki, ``Dual-arm
  cooperative manipulation under joint limit constraints,'' \emph{Robotics and
  Autonomous Systems}, vol.~99, pp. 110--120, 2018.

\bibitem{petitti2016decentralized}
A.~Petitti, A.~Franchi, D.~Di~Paola, and A.~Rizzo, ``Decentralized motion
  control for cooperative manipulation with a team of networked mobile
  manipulators,'' in \emph{2016 IEEE International Conference on Robotics and
  Automation (ICRA)}.\hskip 1em plus 0.5em minus 0.4em\relax IEEE, 2016, pp.
  441--446.

\bibitem{wang2016force}
Z.~Wang and M.~Schwager, ``Force-amplifying n-robot transport system
  (force-ants) for cooperative planar manipulation without communication,''
  \emph{The International Journal of Robotics Research}, vol.~35, no.~13, pp.
  1564--1586, 2016.

\bibitem{marino2018two}
A.~Marino and F.~Pierri, ``A two stage approach for distributed cooperative
  manipulation of an unknown object without explicit communication and unknown
  number of robots,'' \emph{Robotics and Autonomous Systems}, vol. 103, pp.
  122--133, 2018.

\bibitem{culbertson2018decentralized}
P.~Culbertson and M.~Schwager, ``Decentralized adaptive control for
  collaborative manipulation,'' in \emph{2018 IEEE International Conference on
  Robotics and Automation (ICRA)}.\hskip 1em plus 0.5em minus 0.4em\relax IEEE,
  2018, pp. 278--285.

\bibitem{mas2012object}
I.~Mas and C.~Kitts, ``Object manipulation using cooperative mobile multi-robot
  systems,'' in \emph{Proceedings of the World Congress on Engineering and
  Computer Science}, vol.~1, 2012.

\bibitem{wan2012cooperative}
W.~Wan, R.~Fukui, M.~Shimosaka, T.~Sato, and Y.~Kuniyoshi, ``Cooperative
  manipulation with least number of robots via robust caging,'' in \emph{2012
  IEEE/ASME International Conference on Advanced Intelligent Mechatronics
  (AIM)}.\hskip 1em plus 0.5em minus 0.4em\relax IEEE, 2012, pp. 896--903.

\bibitem{kobayashi2012cooperative}
Y.~Kobayashi and S.~Hosoe, ``Cooperative enclosing and grasping of an object by
  decentralized mobile robots using local observation,'' \emph{International
  Journal of Social Robotics}, vol.~4, no.~1, pp. 19--32, 2012.

\bibitem{stouraitis2020online}
T.~Stouraitis, I.~Chatzinikolaidis, M.~Gienger, and S.~Vijayakumar, ``Online
  hybrid motion planning for dyadic collaborative manipulation via bilevel
  optimization,'' \emph{IEEE Transactions on Robotics}, vol.~36, no.~5, pp.
  1452--1471, 2020.

\bibitem{verginis2018communication}
C.~K. Verginis, A.~Nikou, and D.~V. Dimarogonas, ``Communication-based
  decentralized cooperative object transportation using nonlinear model
  predictive control,'' in \emph{2018 European control conference (ECC)}.\hskip
  1em plus 0.5em minus 0.4em\relax IEEE, 2018, pp. 733--738.

\bibitem{ola2020collab}
O.~Shorinwa and M.~Schwager, ``Scalable collaborative manipulation with
  distributed trajectory planning,'' in \emph{2020 Proceedings of IEEE/RSJ
  International Conference on Intelligent Robots and Systems (IROS)}.\hskip 1em
  plus 0.5em minus 0.4em\relax IEEE, 2020, pp. 9108--9115.

\bibitem{garcia1989model}
C.~E. Garcia, D.~M. Prett, and M.~Morari, ``Model predictive control: Theory
  and practice—a survey,'' \emph{Automatica}, vol.~25, no.~3, pp. 335--348,
  1989.

\bibitem{schwenzer2021review}
M.~Schwenzer, M.~Ay, T.~Bergs, and D.~Abel, ``Review on model predictive
  control: An engineering perspective,'' \emph{The International Journal of
  Advanced Manufacturing Technology}, vol. 117, no.~5, pp. 1327--1349, 2021.

\bibitem{nascimento2018nonholonomic}
T.~P. Nascimento, C.~E. D{\'o}rea, and L.~M.~G. Gon{\c{c}}alves, ``Nonholonomic
  mobile robots' trajectory tracking model predictive control: a survey,''
  \emph{Robotica}, vol.~36, no.~5, pp. 676--696, 2018.

\bibitem{shorinwa2023distributedmpc}
O.~Shorinwa and M.~Schwager, ``Distributed model predictive control via
  separable optimization in multiagent networks,'' \emph{IEEE Transactions on
  Automatic Control}, vol.~69, no.~1, pp. 230--245, 2023.

\bibitem{mateos2010distributed}
G.~Mateos, J.~A. Bazerque, and G.~B. Giannakis, ``Distributed sparse linear
  regression,'' \emph{IEEE Transactions on Signal Processing}, vol.~58, no.~10,
  pp. 5262--5276, 2010.

\bibitem{nedic2017achieving}
A.~Nedic, A.~Olshevsky, and W.~Shi, ``Achieving geometric convergence for
  distributed optimization over time-varying graphs,'' \emph{SIAM Journal on
  Optimization}, vol.~27, no.~4, pp. 2597--2633, 2017.

\bibitem{shi2015extra}
W.~Shi, Q.~Ling, G.~Wu, and W.~Yin, ``Extra: An exact first-order algorithm for
  decentralized consensus optimization,'' \emph{SIAM Journal on Optimization},
  vol.~25, no.~2, pp. 944--966, 2015.

\bibitem{xi2017add}
C.~Xi, R.~Xin, and U.~A. Khan, ``Add-opt: Accelerated distributed directed
  optimization,'' \emph{IEEE Transactions on Automatic Control}, vol.~63,
  no.~5, pp. 1329--1339, 2017.

\bibitem{eisen2017decentralized}
M.~Eisen, A.~Mokhtari, and A.~Ribeiro, ``Decentralized quasi-newton methods,''
  \emph{IEEE Transactions on Signal Processing}, vol.~65, no.~10, pp.
  2613--2628, 2017.

\bibitem{liu2023communication}
H.~Liu, J.~Zhang, A.~M.-C. So, and Q.~Ling, ``A communication-efficient
  decentralized newton's method with provably faster convergence,'' \emph{IEEE
  Transactions on Signal and Information Processing over Networks}, 2023.

\bibitem{shorinwa2024distributeddqn}
O.~Shorinwa and M.~Schwager, ``Distributed quasi-newton method for multi-agent
  optimization,'' \emph{arXiv preprint arXiv:2402.06778}, 2024.

\bibitem{wang2019global}
Y.~Wang, W.~Yin, and J.~Zeng, ``Global convergence of admm in nonconvex
  nonsmooth optimization,'' \emph{Journal of Scientific Computing}, vol.~78,
  no.~1, pp. 29--63, 2019.

\bibitem{wachter2006implementation}
A.~W{\"a}chter and L.~T. Biegler, ``On the implementation of an interior-point
  filter line-search algorithm for large-scale nonlinear programming,''
  \emph{Mathematical programming}, vol. 106, no.~1, pp. 25--57, 2006.

\bibitem{opencv_library}
G.~Bradski, ``{The OpenCV Library},'' \emph{Dr. Dobb's Journal of Software
  Tools}, 2000.

\end{thebibliography}
	\bibliographystyle{base/IEEEtran}
		
\end{document}